
\documentclass[twoside,leqno,twocolumn]{article}
\usepackage{amsmath}
\usepackage{ltexpprt}
\usepackage{algorithm}
\usepackage{algorithmic}
\usepackage{txfonts}
\usepackage{graphicx}
\usepackage{subcaption}
\usepackage{comment}
\usepackage{url}
\usepackage{appendix}

\addtolength{\textfloatsep}{-0.2cm}

\begin{document}

\renewcommand{\algorithmicrequire}{\textbf{Input:}}
\renewcommand{\algorithmicensure}{\textbf{Output:}}

\title{\Large Achieving Non-Discrimination in Data Release}
\author{Lu Zhang\thanks{Computer Science and Computer Engineering, University of Arkansas} \\
\and
Yongkai Wu\footnotemark[1] \\
\and
Xintao Wu\footnotemark[1]}
\date{}

\maketitle


\begin{abstract} \small\baselineskip=9pt 
Discrimination discovery and prevention/removal are increasingly important tasks in data mining. Discrimination discovery aims to unveil discriminatory practices on the protected attribute (e.g., gender) by analyzing the dataset of historical decision records, and discrimination prevention aims to remove discrimination by modifying the biased data before conducting predictive analysis. In this paper, we show that the key to discrimination discovery and prevention is to find the meaningful partitions that can be used to provide quantitative evidences for the judgment of discrimination. With the support of the causal graph, we present a graphical condition for identifying a meaningful partition. Based on that, we develop a simple criterion for the claim of non-discrimination, and propose discrimination removal algorithms which accurately remove discrimination while retaining good data utility. Experiments using real datasets show the effectiveness of our approaches.
\end{abstract}

\section{Introduction}\label{sec:intro}

Discrimination discovery and prevention/removal has been an active research area recently \cite{hajian2013methodology,kamiran2012data,ruggieri2010data,romei2014multidisciplinary,feldman2015certifying}. Discrimination discovery is the data mining problem of unveiling evidence of discriminatory practices by analyzing the dataset of historical decision records, and discrimination prevention aims to ensure non-discrimination by modifying the biased data before conducting predictive analysis (e.g., building classifiers). Discrimination refers to unjustified distinctions of individuals based on their membership in a certain group. Federal Laws and regulations (e.g., Fair Credit Reporting Act or Equal Credit Opportunity Act) prohibit discrimination on several grounds, such as gender, age, marital status, sexual orientation, race, religion or belief, and disability or illness, which are referred to as the \emph{protected attributes}. Different types of discrimination have been introduced, which can be generally categorized as direct and indirect discrimination \cite{feldman2015certifying}. Direct discrimination occurs when individuals are treated less favorably in comparable situations explicitly due to their membership in a protected group; indirect discrimination refers to an apparently neutral practice which results in an unfair treatment of a protected group \cite{hajian2013methodology,romei2014multidisciplinary}. 
In this paper, we focus on the problem of discrimination discovery and prevention on direct discrimination. In the following, we simply say discrimination for direct discrimination.


For a quantitative measurement of discrimination, a general legal principle is to measure the difference in the proportion of positive decisions between the protected group and non-protected group \cite{romei2014multidisciplinary}. Discovering and preventing discrimination is not trivial. As shown by \v{Z}liobait\.{e} et al. \cite{zliobaite2011handling}, simply considering the difference measured in the whole population fails to take into account the part of differences that are explainable by other attributes, and removing all the differences will result in reverse discrimination. They proposed to partition the data by conditioning on a certain attribute, and remove the difference within each produced subpopulation. Their method, although in the right direction, has two significant limitations: 1) it does not distinguish whether a partition is meaningful for measuring and removing discrimination; 2) it does not consider the situation where there exist multiple meaningful partitions. 



\setlength{\tabcolsep}{3.5pt}

\begin{table}[t]
\small
\centering
\caption{Summary statistics of Example 1.}
\begin{tabular}{c c c c c c c c c}
\cline{1-9}
test score & \multicolumn{4}{ c }{L} &  \multicolumn{4}{ c }{H} \\
gender & \multicolumn{2}{ c }{female} & \multicolumn{2}{ c }{male} & \multicolumn{2}{ c }{female} & \multicolumn{2}{ c }{male} \\
\cline{1-9}
major & CS & EE & CS & EE & CS & EE & CS & EE \\
No. applicants & 450&	150&	150&	450&	300&	100&	100&	300 \\
admission rate & 20\%&	40\%&	20\%&	40\%&	50\%&	70\%&	50\%&	70\% \\
& \multicolumn{2}{ c }{25\%} & \multicolumn{2}{ c }{35\%} & \multicolumn{2}{ c }{55\%} & \multicolumn{2}{ c }{65\%} \\
\cline{1-9}
\end{tabular}
\label{tab:toy1}
\end{table}

\begin{table}[t]
\small
\centering
\caption{Summary statistics of Example 2.}
\begin{tabular}{c c c c c c c c c}
\cline{1-9}
major & \multicolumn{4}{ c }{CS} &  \multicolumn{4}{ c }{EE} \\
gender & \multicolumn{2}{ c }{female} & \multicolumn{2}{ c }{male} & \multicolumn{2}{ c }{female} & \multicolumn{2}{ c }{male} \\
\cline{1-9}
test score & L & H & L & H & L & H & L & H \\
No. applicants & 450&	300&	150&	100&	600&	300&	200&	100 \\
admission rate & 30\%&	50\%&	36\%&	40\%&	40\%&	60\%&	45\%&	50\% \\
& \multicolumn{2}{ c }{38\%} & \multicolumn{2}{ c }{38\%} & \multicolumn{2}{ c }{47\%} & \multicolumn{2}{ c }{47\%} \\
\cline{1-9}
\end{tabular}
\label{tab:toy2}
\end{table}



Typically, given a population, a partition is determined by a subset of non-protected attributes and a subpopulation is specified by a value assignment to the attributes. In our study we find that, the key to answering whether a population contains discrimination is to finding the meaningful partitions that can be used to provide quantitative evidences for the judgment of discrimination. We have two observations. First, not all partitions are meaningful and using inappropriate partitions will result in misleading conclusions. For example, consider a toy model for a university admission system that contains four attributes: \texttt{gender}, \texttt{major}, \texttt{test\_score}, and \texttt{admission}, where \texttt{gender} is the protected attribute, and \texttt{admission} is the decision. We assume there is no correlation between \texttt{gender} and \texttt{test\_score}. The summary statistics of the admission rate is shown in Table \ref{tab:toy1}. It can be observed that the average admission rate is $37\%$ for females and $46\%$ for males. It is already known that the judgment of discrimination cannot be made simply based on the average admission rates in the whole population and further partitioning is needed. If we partition the data conditioning on \texttt{test\_score} as shown in the table, there exist significant differences (from either $35\%-25\%$ for L or from $65\%-55\%$ for H) between the admission rates of females and males in the two subpopulations. However, intuitively \texttt{test\_score} should not be used for partitioning the data alone as it is uncorrelated with the protected attribute. In fact, this result is indeed misleading, since if we carefully examine the admission rates for each \texttt{major} or each combination \{\texttt{major}, \texttt{test\_score}\}, it shows no bias in any of the subpopulations. Therefore, it would be groundless if if a plaintiff tries to file a lawsuit of discrimination against the university by demonstrating admission rate difference either in the whole population or based on the partitioning on \texttt{test\_score}. 

Second, when there are multiple meaningful partitions, examining one partition showing no bias does not guarantee no bias based on other partitions. Consider a different example on the same toy model shown in Table \ref{tab:toy2}. The average admission rate now becomes $43\%$ equally for both females and males. Further conditioning on \texttt{major} still shows that females and males have the same chance to be admitted in the two subpopulations. However, when partitioning the data based on the combination \{\texttt{major}, \texttt{test\_score}\}, significant differences ($\geq 5\%$) between the admission rates of females and males present. The difference among applicants applied to either a major with test scores of L is clear evidence of discrimination against females. The difference among applicants applied to either a major with test scores of H can be treated as reverse discrimination against males, or tokenism where some strong male applicants are purposefully rejected to refute a claim of discrimination against females. So, the data publisher cannot make a non-discrimination claim.

The above two examples show that, any quantitative evidence of discrimination must be measured under a meaningful partition. In addition, to ensure non-discrimination, we must show no bias for all meaningful partitions. In this paper, we make use of the causal graphs to identify meaningful partitions and develop discrimination discovery and removal algorithms. A causal graph is a probabilistic graph model widely used for causation representation, reasoning and inference \cite{spirtes2000causation}. 
Our main results are: 1) a graphical condition for identifying a meaningful partition, which is defined by a subset of attributes referred to as the \emph{block set}; 2) a simple criterion for the claim of non-discrimination; and 3) discrimination removal algorithms which achieve non-discrimination while maximizing the data utility. Our approaches can be used to find quantitative evidences of discrimination for plaintiffs, or to achieve a non-discrimination guarantee for data owners. The experiments using real datasets show that our proposed approaches are effective in discovering and removing discrimination. 



The rest of the paper is organized as follows. Section \ref{sec:md} models discrimination and presents the graphical condition using the causal graph. Section \ref{sec:ddp} establishes the non-discrimination criterion, develops the discrimination discovery mechanism, and proposes discrimination removal algorithms. Section \ref{sec:rnc} discusses a relaxed non-discrimination criterion. The experimental setup and results are discussed in Section \ref{sec:ee}. Section \ref{sec:rw} summarizes the related work, and Section \ref{sec:cc} concludes the paper.

\section{Modeling Discrimination Using Causal Graph}\label{sec:md}
Consider a dataset $\mathcal{D}$ which may contain discrimination against a certain protected group. Each individual in $\mathcal{D}$ is specified by a set of attributes $\mathbf{V}$, which includes the protected attribute (e.g., \texttt{gender}), the decision attribute (e.g., \texttt{admission}), and the non-protected attributes (e.g., \texttt{major}). Throughout this paper, we use an uppercase alphabet, e.g., $X$, to represent an attribute; a bold uppercase alphabet, e.g., $\mathbf{X}$, to represent a subset of attributes, e.g., \{\texttt{gender}, \texttt{major}\}. We use a lowercase alphabet, e.g., $x$, to represent a domain value of attribute $X$; a bold lowercase alphabet, e.g., $\mathbf{x}$, to represent a value assignment to $\mathbf{X}$. We denote the decision attribute by $E$, associated with domain values of positive decision $e^{+}$ and negative decision $e^{-}$; denote the protected attribute by $C$, associated with two domain values $c^{-}$ (e.g., female) and $c^{+}$ (e.g., male). 
 
For the measurement of discrimination, we use \emph{risk difference} \cite{romei2014multidisciplinary} to measure the difference in the the proportion of positive decisions between the protected group and non-protected group. Formally, by assuming $c^{-}$ is the protected group, risk difference is defined as $\Delta P|_{\mathbf{s}} = \Pr(e^{+}|c^{+},\mathbf{s})-\Pr(e^{+}|c^{-},\mathbf{s})$, where $\mathbf{s}$ denotes a specified subpopulation produced by a partition $\mathbf{S}$. 
We say that the protected group is treated less favorably within subpopulation $\mathbf{s}$ if $\Delta P|_{\mathbf{s}} \geq \tau$, where $\tau>0$ is a threshold for discrimination depending on law. For instance, the 1975 British legislation for sex discrimination sets $\tau = 0.05$, namely a $5\%$ difference. To avoid reverse discrimination, we do not specify which group is the protected group. Thus, we use $|\Delta P|_{\mathbf{s}}|$ to deal with both scenarios where either $c^{-}$ or $c^{+}$ is designated as the protected group.


A DAG $\mathcal{G}$ is represented by a set of nodes and a set of arcs. Each node represents an attribute in $\mathcal{D}$. 
Each arc, denoted by an arrow $\rightarrow$ in the graph, connects a pair of nodes where the node emanating the arrow is called the parent of the other node. The DAG is assumed to satisfy the Markov condition, i.e., each node $X$ is independent of all its non-descendants conditioning on its parents $\mathrm{Par}(X)$. 
Each node is associated with a conditional probability table (CPT) specified by $\Pr(X|\mathrm{Par}(X))$. The joint probability distribution can be computed using the factorization formula \cite{koller2009probabilistic}:
\begin{equation}\label{eq:ff}
\Pr(\mathbf{v}) = \prod_{X\in \mathbf{V}}\Pr(x|\mathrm{Par}(X)).
\end{equation}
Spirtes et al. \cite{spirtes2000causation} have shown that, when causal interpretations are given to the DAG, i.e., each node's parents are this node's direct causes, the DAG represents a correct causal structure of the underlying data. In particular, the causation among the attributes are encoded in the missing arcs in the DAG: if there is no arc between two nodes in $\mathcal{G}$, then it guarantees no direct causal effect between the two attributes in $\mathcal{D}$. The DAG with the  causal interpretation is called the \emph{causal graph}.


In this paper, we assume that we have a causal graph $\mathcal{G}$ that correctly represents the causal structure of the dataset. We also assume that the arc $C\rightarrow E$ is present in $\mathcal{G}$, since the absence of the arc represents a zero direct effect of $C$ on $E$. A causal DAG can be learned from data and domain knowledge. In the past decades, many algorithms have been proposed to learn causal DAGs, and they are proved to be quite successful \cite{spirtes2000causation,neapolitan2004learning,colombo2014order,kalisch2007estimating}. 
In our implementation, we use the original PC algorithm \cite{spirtes2000causation} to build the causal graph.

\subsection{Identifying Meaningful Partition}
Discrimination occurs due to difference decisions made explicitly based on the membership in the protected group. As stated in \cite{foster2004causation}, all discrimination claims require plaintiffs to demonstrate the existence of a causal connection between the decision and the protected attribute. Given a partition $\mathbf{S}$, for $\Delta P|_{\mathbf{s}}$ to capture the discriminatory effect and be considered as a quantitative evidence of discrimination, one needs to prove that this difference is directly caused by the protected attribute. 
In causal graphs, causal effects are carried by the paths that trace arrows pointing from the cause to the effect. Specially, the direct causal effect of $C$ on $E$, if exists, is carried by the direct arc from $C$ to $E$, i.e., $C \rightarrow E$ in the causal graph. In the following, we show that $\Delta P|_{\mathbf{s}}$ captures the direct causal effect of $C$ on $E$ if $\mathbf{S}$ satisfies certain requirements, which we refer to as the block set.

We use the graphical criterion $d$-separation. Consider a dataset $\mathcal{D}$ and its represented causal graph $\mathcal{G}$. Nodes $X$ and $Y$ are said to be $d$-separated by a set of attributes $\mathbf{Z}$ in $\mathcal{G}$, denoted by $(X\Perp Y \mid \mathbf{Z})_{\mathcal{G}}$, if following requirement is met:
\begin{Definition}[\textit{d}-Separation \cite{spirtes2000causation}]\label{def:d}
Nodes $X$ and $Y$ are $d$-separated by $\mathbf{Z}$ if and only if $\mathbf{Z}$ blocks all paths from $X$ to $Y$. A path $p$ is said to be blocked by a set of nodes $\mathbf{Z}$ if and only if
\begin{enumerate}
	\item $p$ contains a chain $i\rightarrow m\rightarrow j$ or a fork $i\leftarrow m \rightarrow j$ such that the middle node $m$ is in $\mathbf{Z}$, or
	\item $p$ contains an collider $i\rightarrow m\leftarrow j$ such that the middle node $m$ is not in $\mathbf{Z}$ and no descendant of $m$ is in $\mathbf{Z}$.
\end{enumerate}
\end{Definition}
Attributes $X$ and $Y$ are said to be conditionally independent given a set of attributes $\mathbf{Z}$, denoted by $(X\Perp Y \mid \mathbf{Z})_{\mathcal{D}}$, if $\Pr(x|y,\mathbf{z})=\Pr(x|\mathbf{z})$ holds for all values $x,y,\mathbf{z}$. With the Markov condition, the $d$-separation criterion in $\mathcal{G}$ and the conditional independence relations in $\mathcal{D}$ are connected such that, if we have $(X\Perp Y \mid \mathbf{Z})_{\mathcal{G}}$, then we must have $(X\Perp Y \mid \mathbf{Z})_{\mathcal{D}}$.

We make use of the above connection to identify the direct causal effect of $C$ on $E$. We construct a new DAG $\mathcal{G}'$ by deleting the arc $C\rightarrow E$ from $\mathcal{G}$ and keeping everything else unchanged. 
Thus, the possible difference between the causal relationships represented by $\mathcal{G}'$ and $\mathcal{G}$ lies merely in the presence of the direct causal effect of $C$ on $E$. We consider a node set $\mathbf{B}$ such that $(E\Perp C \mid \mathbf{B})_{\mathcal{G}'}$, and use $\mathbf{B}$ to examine the conditional independence relations in $\mathcal{D}$. If there is no direct causal effect of $C$ on $E$ in $\mathcal{G}$, we should also obtain $(E\Perp C \mid \mathbf{B})_{\mathcal{G}}$, which entails $(E\Perp C \mid \mathbf{B})_{\mathcal{D}}$, i.e., $\Pr(e^{+}|c^{+},\mathbf{b}) = \Pr(e^{+}|c^{-},\mathbf{b}) = \Pr(e^{+}|\mathbf{b})$ for each value assignment $\mathbf{b}$ of $\mathbf{B}$. However, if we observe $\Pr(e^{+}|c^{+},\mathbf{b})\neq \Pr(e^{+}|c^{-},\mathbf{b})$, the difference must be due to the existence of the direct causal effect of $C$ on $E$. Therefore, $\Delta P|_{\mathbf{b}}=\Pr(e^{+}|c^{+},\mathbf{b}) - \Pr(e^{+}|c^{-},\mathbf{b})$ can be used to measure the direct causal effect of $C$ on $E$. On the other hand, if a node set $\mathbf{S}$ does not satisfy $(E\Perp C \mid \mathbf{S})_{\mathcal{G}'}$, then this conditional dependence between $C$ and $E$ given $\mathbf{S}$ that is not caused by the direct causal effect will also exist in $\Delta P|_{\mathbf{s}}$. As a result, $\Delta P|_{\mathbf{s}}$ cannot accurately measure the direct causal effect.

It must be noted that, for $\Delta P|_{\mathbf{b}}$ to correctly measure the direct causal effect, set $\mathbf{B}$ cannot contain any descendant of $E$ even when it satisfies the requirement $(E\Perp C \mid \mathbf{B})_{\mathcal{G}'}$. This is because when conditioning on $E$'s descendants, part of the knowledge of $E$ is already given since the consequences caused by $E$ is known. 


Based on the above analysis, we measure the direct causal effect of $C$ on $E$ using $\Delta P|_{\mathbf{b}}$ where the node set $\mathbf{B}$ satisfies the following requirements: (1) $(C\Perp E \mid \mathbf{B})_{\mathcal{G}'}$ holds; (2) $\mathbf{B}$ contains none of $E$'s decedents. We call such set $\mathbf{B}$ a \emph{block set}. 
By treating the direct causal effect of $C$ one $E$ as the effect of direct discrimination, $\Delta P|_{\mathbf{b}}$ is considered as a correct measure for direct discrimination if and only if $\mathbf{B}$ is a block set. Thus, if $|\Delta P|_{\mathbf{b}}|\geq \tau$, it is a quantitative evidence of discrimination against either $c^{-}$ or $c^{+}$ for subpopulation $\mathbf{b}$. As can be seen, each block set $\mathbf{B}$ forms a meaningful partition on the dataset where the direct causal effect of $C$ on $E$ within each subpopulation $\mathbf{b}$ can be correctly measured by $\Delta P|_{\mathbf{b}}$. On the other hand, for any partition that is not defined by a block set, the measured differences, which either contain spurious influences or have been explained by the consequences of the decisions, cannot accurately measure the direct causal effect and hence cannot be used to prove discrimination or non-discrimination. Therefore, we give the following theorem. The proof follows the above analysis.



\begin{theorem}\label{thm:dce}
A node set $\mathbf{B}$ forms a meaningful partition for measuring discrimination if and only if $\mathbf{B}$ is a block set, i.e., $\mathbf{B}$ satisfies: (1) $(C\Perp E \mid \mathbf{B})_{\mathcal{G}'}$ holds; (2) $\mathbf{B}$ contains none of $E$'s decedents, where $\mathcal{G}'$ is the graph constructed by deleting arc $C\rightarrow E$ from $\mathcal{G}$. Discriminatory effect is considered to present for subpopulation $\mathbf{b}$ if $|\Delta P|_{\mathbf{b}}| \geq \tau$, where $\Delta P|_{\mathbf{b}} = \Pr(e^{+}|c^{+},\mathbf{b}) - \Pr(e^{+}|c^{-},\mathbf{b})$. 
\end{theorem}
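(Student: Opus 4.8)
The plan is to first make precise the informal phrase ``$\mathbf{B}$ forms a meaningful partition'': I read it as ``$\Delta P|_{\mathbf{b}}$ equals the direct causal effect of $C$ on $E$ within subpopulation $\mathbf{b}$, for every value assignment $\mathbf{b}$ and every model compatible with $\mathcal{G}$'', where the direct causal effect is the component of the $C$--$E$ association transmitted solely through the arc $C\rightarrow E$. Concretely, fix any structural causal model compatible with $\mathcal{G}$, write $E=f_E(C,\mathrm{Par}(E)\setminus\{C\},U_E)$, and let the controlled direct effect at $\mathbf{b}$ be $\mathrm{CDE}_{\mathbf{b}}=\Pr(f_E(c^{+},\cdot,\cdot)=e^{+}\mid\mathbf{b})-\Pr(f_E(c^{-},\cdot,\cdot)=e^{+}\mid\mathbf{b})$, with the remaining arguments kept at their observational distribution given $\mathbf{b}$; ``meaningful'' then means $\Delta P|_{\mathbf{b}}=\mathrm{CDE}_{\mathbf{b}}$. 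The structural fact to exploit throughout is that $\mathcal{G}$ and $\mathcal{G}'$ differ only by the arc $C\rightarrow E$, so the $C$--$E$ paths of $\mathcal{G}$ other than that single arc are exactly the $C$--$E$ paths of $\mathcal{G}'$. I would then prove the two directions of the equivalence separately and dispatch the threshold statement as definitional.

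For ``block set $\Rightarrow$ meaningful'', suppose $\mathbf{B}$ satisfies (1) and (2). By the structural fact, condition (1), $(C\Perp E\mid\mathbf{B})_{\mathcal{G}'}$, says precisely that $\mathbf{B}$ blocks every $C$--$E$ path in $\mathcal{G}$ except the arc $C\rightarrow E$ itself; condition (2) says $\mathbf{B}$ contains no descendant of $E$, so conditioning on $\mathbf{b}$ leaks no information about $U_E$ (equivalently, $U_E$ stays $d$-separated from $C$ and from $\mathbf{B}$ once $C\rightarrow E$ is removed). Combining these is exactly the non-parametric single-door argument, which I would reprove from the factorization~(\ref{eq:ff}): it gives $\Pr(e^{+}\mid c,\mathbf{b})=\Pr(f_E(c,\cdot,\cdot)=e^{+}\mid\mathbf{b})$, hence $\Delta P|_{\mathbf{b}}=\mathrm{CDE}_{\mathbf{b}}$. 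In particular, in the special case where the arc $C\rightarrow E$ is inactive (its CPT entry does not depend on $c$), $\mathcal{D}$ is Markov with respect to $\mathcal{G}'$ as well, so $(C\Perp E\mid\mathbf{B})_{\mathcal{G}'}$ yields $(C\Perp E\mid\mathbf{B})_{\mathcal{D}}$ by the Markov condition, i.e.\ $\Delta P|_{\mathbf{b}}=0$ for all $\mathbf{b}$; this recovers the ``no false evidence'' reading used in the body.

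For ``not a block set $\Rightarrow$ not meaningful'', a set $\mathbf{B}$ can fail in two ways. (i) $(C\Perp E\mid\mathbf{B})_{\mathcal{G}'}$ fails: then some path $p$ between $C$ and $E$ is active given $\mathbf{B}$ and does not use the arc $C\rightarrow E$, so it transmits $C$--$E$ association within $\mathbf{b}$ that is not the direct effect; to show this genuinely corrupts $\Delta P|_{\mathbf{b}}$ I would instantiate a model compatible with $\mathcal{G}$ with $f_E$ independent of $C$ (zero direct effect) but with the mechanisms along $p$ chosen so that $\Pr(e^{+}\mid c^{+},\mathbf{b})\neq\Pr(e^{+}\mid c^{-},\mathbf{b})$, giving $\Delta P|_{\mathbf{b}}\neq 0=\mathrm{CDE}_{\mathbf{b}}$. (ii) $\mathbf{B}$ contains a descendant $D$ of $E$: conditioning on $D$ conditions on a consequence of the decision, inducing dependence between $C$ and $U_E$ given $\mathbf{b}$ (a collider/selection-bias effect), and again I would exhibit a model with $f_E$ independent of $C$ in which $\Delta P|_{\mathbf{b}}\neq0$. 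Either way $\Delta P|_{\mathbf{b}}=\mathrm{CDE}_{\mathbf{b}}$ fails, so $\mathbf{B}$ is not a meaningful partition.

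Finally, for a block set $\mathbf{B}$ we have $\Delta P|_{\mathbf{b}}=\mathrm{CDE}_{\mathbf{b}}$; since the law operationalizes ``the protected group is disadvantaged in $\mathbf{b}$'' as the signed direct effect reaching the legal threshold $\tau$, and since we decline to fix in advance which of $c^{-},c^{+}$ is protected, the criterion becomes $|\Delta P|_{\mathbf{b}}|\geq\tau$, as stated. The step I expect to be the real work is the sufficiency direction: formulating and proving carefully the claim ``conditioning on a block set isolates exactly the arc $C\rightarrow E$'' (the non-parametric single-door criterion) from the factorization, together with manufacturing the counterexample models in the converse; the rest is bookkeeping with $d$-separation and the Markov condition.
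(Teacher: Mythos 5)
Your formalization of ``meaningful'' and your sufficiency direction are sound, and in fact more rigorous than what the paper provides: the paper's entire proof of this theorem is the informal analysis preceding the statement, and your single-door-style derivation from the factorization \eqref{eq:ff} is essentially the computation the paper only carries out later, in the proof of Lemma~\ref{thm:qd} (Appendix~\ref{sec:thm:qd}). There, condition (1) is used to obtain $(C\Perp \mathbf{Q}' \mid \mathbf{B})$ --- hence $\Pr(\mathbf{q}'|c,\mathbf{b})=\Pr(\mathbf{q}'|\mathbf{b})$ --- and condition (2) is used to marginalize out $E$'s descendants cleanly; your $\Pr(e^{+}|c,\mathbf{b})=\Pr(f_E(c,\cdot,\cdot)=e^{+}\mid\mathbf{b})$ is exactly that identity. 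So the ``block set $\Rightarrow$ meaningful'' half matches the paper's route, just made explicit.

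The step that would fail is case (ii) of your necessity direction. Suppose $\mathbf{B}$ satisfies condition (1) but contains a descendant $D$ of $E$. You propose to exhibit a model compatible with $\mathcal{G}$ in which $f_E$ does not depend on $C$ yet $\Delta P|_{\mathbf{b}}\neq 0$. No such model exists: if $f_E$ is independent of $C$, the induced distribution is Markov with respect to $\mathcal{G}'$ (the arc $C\rightarrow E$ is functionally inert), and condition (1) then forces $(C\Perp E\mid\mathbf{B})_{\mathcal{D}}$, i.e., $\Delta P|_{\mathbf{b}}=0$ for every $\mathbf{b}$. The collider-opening intuition you invoke --- conditioning on $D$ activates $C\rightarrow E\leftarrow U_E$ --- requires the arc $C\rightarrow E$ to be active, which is precisely what you have switched off. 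The repair is to witness the failure from the other side: take a model with a nonzero direct effect and show $\Delta P|_{\mathbf{b}}\neq \mathrm{CDE}_{\mathbf{b}}$. For instance, if $D$ is a deterministic copy of $E$, conditioning on $D=e^{+}$ forces $E=e^{+}$ under both $c^{+}$ and $c^{-}$, so $\Delta P|_{\mathbf{b}}=0$ while $\mathrm{CDE}_{\mathbf{b}}\neq 0$. With that fix, plus an explicit appeal to the completeness of $d$-separation (existence of a Markov distribution realizing dependence along the active path) to make case (i) airtight, your argument goes through and is strictly more formal than the paper's.
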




\section{Discrimination Discovery and Prevention}\label{sec:ddp}

\subsection{Non-Discrimination Criterion}
Now we develop the criterion that ensures non-discrimination for a dataset. Based on Theorem \ref{thm:dce}, if each block set $\mathbf{B}$ is examined and ensures that $|\Delta P|_{\mathbf{b}}|< \tau$ holds for each subpopulation $\mathbf{b}$, we can guarantee that the dataset contains no discriminatory effect and is not liable for any claim of direct discrimination. Otherwise, there exists a subpopulation $\mathbf{b}$ of a block set such that $|\Delta P|_{\mathbf{b}}|\geq \tau$, which implies that subpopulation $\mathbf{b}$ suffers the risk of being accused of discrimination.
Therefore, we give the following theorem.

\begin{theorem}\label{thm:nd}
Non-discrimination is claimed for $\mathcal{D}$ if and only if inequality $|\Delta P|_{\mathbf{b}}| < \tau$ holds for each value assignment $\mathbf{b}$ of each block set $\mathbf{B}$.
\end{theorem}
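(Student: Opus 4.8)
The plan is to derive Theorem~\ref{thm:nd} as an essentially immediate corollary of Theorem~\ref{thm:dce}, since the latter already establishes that the block sets are exactly the partitions under which $\Delta P|_{\mathbf{b}}$ faithfully measures the direct causal effect of $C$ on $E$. First I would make the definitional reduction explicit: say that $\mathcal{D}$ is \emph{non-discriminatory} precisely when no meaningful partition exhibits a discriminatory effect, i.e.\ when there is no block set $\mathbf{B}$ and value assignment $\mathbf{b}$ with $|\Delta P|_{\mathbf{b}}| \geq \tau$. By Theorem~\ref{thm:dce}, the meaningful partitions are exactly the block sets, and the discriminatory-effect condition for a subpopulation $\mathbf{b}$ is exactly $|\Delta P|_{\mathbf{b}}| \geq \tau$; so the claim reduces to a logical transcription of ``no meaningful partition witnesses discrimination.''

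For the ``if'' direction, I would assume $|\Delta P|_{\mathbf{b}}| < \tau$ for every value assignment $\mathbf{b}$ of every block set $\mathbf{B}$, and argue that any potential quantitative evidence of direct discrimination must take the form $\Delta P|_{\mathbf{s}}$ for some partition $\mathbf{S}$; by Theorem~\ref{thm:dce} such a quantity is a valid measure of the direct causal effect only when $\mathbf{S}$ is a block set, and for those the hypothesis already gives $|\Delta P|_{\mathbf{b}}| < \tau$, so no subpopulation crosses the legal threshold and non-discrimination holds. For the ``only if'' direction, I would argue the contrapositive: if some block set $\mathbf{B}$ has a value assignment $\mathbf{b}$ with $|\Delta P|_{\mathbf{b}}| \geq \tau$, then by Theorem~\ref{thm:dce} $\Delta P|_{\mathbf{b}}$ correctly quantifies the direct causal effect of $C$ on $E$ within $\mathbf{b}$, so this subpopulation exhibits a genuine discriminatory effect and $\mathcal{D}$ cannot be claimed non-discriminatory.

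I would also note the one substantive point that needs care, namely that the criterion ranges over \emph{all} block sets rather than a single one: the second toy example (Table~\ref{tab:toy2}) shows that a dataset can be clean with respect to one meaningful partition (conditioning on \texttt{major}) yet reveal discrimination under another (conditioning on \{\texttt{major}, \texttt{test\_score}\}), so the quantifier ``each block set'' cannot be weakened to ``some block set.'' This is really the only place where the statement has content beyond restating Theorem~\ref{thm:dce}; the rest is bookkeeping. The main (mild) obstacle is therefore not a technical difficulty but making sure the universal quantification over block sets is justified --- i.e.\ that exhibiting a single offending subpopulation in a single block set is enough to defeat a non-discrimination claim, which follows because Theorem~\ref{thm:dce} certifies each block set individually as a legitimate venue for measuring direct discrimination.
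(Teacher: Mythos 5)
Your proposal is correct and matches the paper's own justification: the paper also derives Theorem~\ref{thm:nd} directly from Theorem~\ref{thm:dce}, arguing that if every subpopulation of every block set satisfies $|\Delta P|_{\mathbf{b}}| < \tau$ then no meaningful partition exhibits a discriminatory effect, and conversely that any violating subpopulation of a block set constitutes quantitative evidence of discrimination. Your additional remark that the quantifier must range over all block sets (illustrated by Table~\ref{tab:toy2}) is consistent with, and a helpful elaboration of, the paper's reasoning.
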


\begin{figure}
\centering
\includegraphics[height=0.8in, width=1.7in]{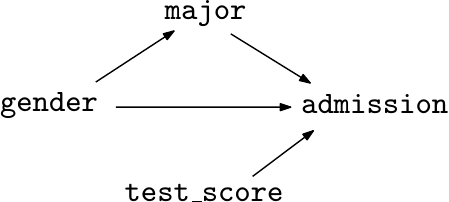}
\caption{Causal graph of an example university admission system.}
\label{fig:toy}
\end{figure}

We use the illustrative examples in Section \ref{sec:intro} to show how the criterion works. The causal graph of the examples is shown in Figure \ref{fig:toy}. There are two block sets in this graph: \{\texttt{major}\}, and \{\texttt{major},\texttt{test\_score}\}. Note that \texttt{test\_score} alone is not a block set. That is why conditioning on it will produce misleading results. For the example shown in Table \ref{tab:toy1}, examining both block sets shows no discriminatory effect. Thus, non-discrimination can be claimed. For the example shown in Table \ref{tab:toy2}, although examining \{\texttt{major}\} shows no discriminatory effect, when examining \{\texttt{major},\texttt{test\_score}\} we observe $|\Delta P|_{\textrm{\{math,B\}}}| = 0.06$, $|\Delta P|_{\textrm{\{math,A\}}}| = 0.10$, $|\Delta P|_{\textrm{\{biology,B\}}}| = 0.05$, and $|\Delta P|_{\textrm{\{biology,A\}}}| = 0.10$. Thus, the evidences of discrimination for four subpopulations are identified.


Although Theorem \ref{thm:nd} provides a clear criterion for non-discrimination, it requires examining each subpopulation of each block set. A brute force algorithm may have an exponential complexity. Instead of examining all block sets, the following theorem shows that we only need to examine one node set $\mathbf{Q}$, which is the set of all $E$'s parents except $C$, i.e., $\mathbf{Q}=\mathrm{Par}(E)\backslash \{C\}$. 

\begin{theorem}\label{thm:snd}
Non-discrimination is claimed if and only if inequality $|\Delta P|_{\mathbf{q}}| < \tau$ holds for each value assignment $\mathbf{q}$ of set $\mathbf{Q}$ where $\mathbf{Q}=\mathrm{Par}(E)\backslash \{C\}$.
\end{theorem}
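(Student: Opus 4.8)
The plan is to deduce the statement from Theorem~\ref{thm:nd} by establishing two facts: that $\mathbf{Q}=\mathrm{Par}(E)\setminus\{C\}$ is itself a block set, and that for every block set $\mathbf{B}$ and every occurring value assignment $\mathbf{b}$, the risk difference $\Delta P|_{\mathbf{b}}$ is a convex combination of values of the form $\Delta P|_{\mathbf{q}}$, where $\mathbf{q}$ ranges over value assignments of $\mathbf{Q}$. Granting these, the ``only if'' direction is immediate, since it is just the instance $\mathbf{B}=\mathbf{Q}$ of the criterion in Theorem~\ref{thm:nd}; and the ``if'' direction follows because the magnitude of a convex combination is at most the largest magnitude among its terms: if $|\Delta P|_{\mathbf{q}}|<\tau$ for every $\mathbf{q}$, then $|\Delta P|_{\mathbf{b}}|<\tau$ for every $\mathbf{b}$ of every block set $\mathbf{B}$, which by Theorem~\ref{thm:nd} is exactly non-discrimination.

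That $\mathbf{Q}$ is a block set is quick. In $\mathcal{G}'$ the parent set of $E$ is precisely $\mathbf{Q}$, and since $\mathcal{G}$ is acyclic and contains $C\rightarrow E$, $C$ is a non-descendant of $E$ in $\mathcal{G}$ and hence in the subgraph $\mathcal{G}'$; the standard fact that in a DAG a node is $d$-separated from its non-descendants by its parents then yields $(C\Perp E\mid\mathbf{Q})_{\mathcal{G}'}$. Moreover $\mathbf{Q}\subseteq\mathrm{Par}(E)$, so $\mathbf{Q}$ contains no descendant of $E$. Thus both defining conditions of a block set hold.

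For the convex-combination claim, fix a block set $\mathbf{B}$ (disjoint from $\{C,E\}$, since a partition uses only non-protected, non-decision attributes) and an occurring value $\mathbf{b}$. Write $\mathbf{Q}=\mathbf{Q}'\cup\mathbf{R}$ with $\mathbf{R}=\mathbf{Q}\cap\mathbf{B}$, $\mathbf{Q}'=\mathbf{Q}\setminus\mathbf{B}$, and let $\mathbf{r}$ be the restriction of $\mathbf{b}$ to $\mathbf{R}$. By the law of total probability over the values of $\mathbf{Q}'$, $\Pr(e^{+}\mid c,\mathbf{b})=\sum_{\mathbf{q}'}\Pr(e^{+}\mid c,\mathbf{b},\mathbf{q}')\,\Pr(\mathbf{q}'\mid c,\mathbf{b})$ for $c\in\{c^{+},c^{-}\}$. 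Two observations tame the two factors. First, the conditioning event $(c,\mathbf{q}',\mathbf{b})$ fixes values on a set of non-descendants of $E$ containing all of $\mathrm{Par}(E)=\{C\}\cup\mathbf{Q}$, so the Markov condition on $\mathcal{G}$ gives $\Pr(e^{+}\mid c,\mathbf{b},\mathbf{q}')=\Pr(e^{+}\mid c,\mathbf{q})$ with $\mathbf{q}=(\mathbf{q}',\mathbf{r})$. Second --- and this is the crux --- I claim $(C\Perp\mathbf{Q}'\mid\mathbf{B})_{\mathcal{D}}$, which gives $\Pr(\mathbf{q}'\mid c,\mathbf{b})=\Pr(\mathbf{q}'\mid\mathbf{b})$, a weight not depending on $c$. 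Substituting both into the expansion and subtracting the $c=c^{+}$ and $c=c^{-}$ identities yields $\Delta P|_{\mathbf{b}}=\sum_{\mathbf{q}'}\Pr(\mathbf{q}'\mid\mathbf{b})\,\Delta P|_{(\mathbf{q}',\mathbf{r})}$, a convex combination of values of the form $\Delta P|_{\mathbf{q}}$, as required.

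The main obstacle is therefore the independence $(C\Perp\mathbf{Q}'\mid\mathbf{B})_{\mathcal{D}}$. By soundness of $d$-separation it suffices to show $(C\Perp\mathbf{Q}'\mid\mathbf{B})_{\mathcal{G}}$, i.e.\ that $\mathbf{B}$ blocks every path in $\mathcal{G}$ between $C$ and any $Q\in\mathbf{Q}'$; I would argue by contradiction from an unblocked such path $p$, splitting on whether $p$ meets $E$. If $E\notin p$, then $p$ is also a path of $\mathcal{G}'$, and appending the edge $Q\rightarrow E$ (present because $Q\neq C$) gives a path between $C$ and $E$ in $\mathcal{G}'$ whose only new vertex is $Q$; at $Q$ the appended edge points outward, so $Q$ is not a collider there, and since $Q\notin\mathbf{B}$ the vertex is open, so the new path is unblocked --- contradicting $(C\Perp E\mid\mathbf{B})_{\mathcal{G}'}$. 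If $E\in p$, then because $\mathbf{B}$ contains no descendant of $E$ and $E\notin\mathbf{B}$, $E$ cannot be a collider on $p$, so at least one edge incident to $E$ on $p$ points away from $E$; following $p$ from $E$ in that direction, each successive vertex is a child of a descendant of $E$, hence a descendant of $E$, hence not in $\mathbf{B}$, hence (same reasoning) not a collider on $p$, which forces that entire sub-path to be directed away from $E$ and therefore makes its far endpoint --- which is $C$ or $Q$ --- a descendant of $E$; this contradicts acyclicity of $\mathcal{G}$, since $C\rightarrow E$ (respectively $Q\rightarrow E$, as $Q\in\mathrm{Par}(E)$) is an edge. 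Ruling out both cases gives the independence, and with the routine check that $\Delta P|_{\mathbf{q}}$ is defined wherever $\Pr(\mathbf{q}'\mid\mathbf{b})>0$ the argument is complete.
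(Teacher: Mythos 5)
Your proof is correct and follows the same skeleton as the paper's: both reduce Theorem \ref{thm:snd} to Theorem \ref{thm:nd} via exactly the two facts you isolate, which appear in the paper as Lemma \ref{thm:q} ($\mathbf{Q}$ is a block set) and Lemma \ref{thm:qd} ($\Delta P|_{\mathbf{b}}$ is a convex combination of the $\Delta P|_{\mathbf{q}}$), and the top-level deduction is identical. Where you diverge is inside the two lemmas. For Lemma \ref{thm:q} the paper classifies the $C$--$E$ paths of $\mathcal{G}'$ by the last node before $E$ and checks blocking case by case, whereas you invoke the standard fact that a node's parents $d$-separate it from its non-descendants; your route is shorter and less error-prone. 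For Lemma \ref{thm:qd} the paper fixes a topological ordering $\{\mathbf{X},C,\mathbf{Y},\mathbf{Q},E,\mathbf{Z}\}$, expands the joint by the chain rule, and then needs the independence of $C$ and $\mathbf{Q}'$ given $\mathbf{B}$; at that point it establishes the $d$-separation in $\mathcal{G}'$ and asserts that it ``entails'' the same in $\mathcal{G}$ --- a transfer that is not automatic, since restoring the arc $C\rightarrow E$ creates new $C$--$\mathbf{Q}'$ paths through $E$, and the paper leaves it unargued. You instead prove $(C\Perp\mathbf{Q}'\mid\mathbf{B})_{\mathcal{G}}$ directly, and your Case 2 (paths through $E$) supplies precisely the argument the paper skips: once a path leaves $E$ along an outgoing arc it stays in $\mathbf{B}$-free descendant territory, must remain directed away from $E$, and forces a cycle. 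So on this point your write-up is more complete than the paper's. The one routine detail worth adding in your Case 1 is that a collider on $p$ opened by a descendant in $\mathbf{B}$ stays open after deleting $C\rightarrow E$: a witnessing directed path into $\mathbf{B}$ cannot pass through $E$, since its endpoint would then be a descendant of $E$, which the block-set condition excludes.
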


\begin{proof}
We first give two lemmas and their proofs are given Appendices \ref{sec:thm:qd} and \ref{sec:thm:q}.

\begin{lemma}\label{thm:qd}
Given a value assignment $\mathbf{b}$ of a block set $\mathbf{B}$, $\mathbf{Q}=\mathrm{Par}(E)\backslash \{C\}$, we have
\begin{equation*}\label{eq:qd}
\Delta P|_{\mathbf{b}}=\sum_{\mathbf{Q}'}\Pr(\mathbf{q}'|\mathbf{b})\cdot \Delta P|_{\mathbf{q}},
\end{equation*}
where $\mathbf{q}'$ goes through all the possible combinations of the values of non-overlapping attributes $\mathbf{Q}' = \mathbf{Q} \backslash \mathbf{B}$.
For overlapping attributes $\mathbf{Q} \cap \mathbf{B}$, $\mathbf{q}$ and $\mathbf{b}$ have the same values.
\end{lemma}

\begin{lemma}\label{thm:q}
Node set $\mathbf{Q}$ of all $E$'s parents except $C$, i.e., $\mathbf{Q}=\mathrm{Par}(E)\backslash \{C\}$, must be a block set.
\end{lemma}

Lemma \ref{thm:qd} indicates that, for each value assignment $\mathbf{b}$ of each block set, $\Delta P|_{\mathbf{b}}$ can be expressed by a weighted average of $\Delta P|_{\mathbf{q}}$. If $|\Delta P|_{\mathbf{q}}| < \tau$ for each subpopulation of $\mathbf{Q}$, then it is guaranteed that $|\Delta P|_{\mathbf{b}}|<\tau$ holds for each subpopulation of each block set. According to Theorem \ref{thm:nd}, non-discrimination is claimed. Otherwise, Lemma \ref{thm:q} means that $|\Delta P|_{\mathbf{b}}|\geq \tau$ for at least one block set $\mathbf{Q}$, which provides the evidence of discrimination.
\end{proof}



\subsection{Discrimination Discovery}\label{sec:dda}
We present the non-discrimination certifying algorithm based on Theorem \ref{thm:snd}. The algorithm first finds set $\mathbf{Q}$ in the graph. Then, the algorithm computes $|\Delta P|_{\mathbf{q}}|=|\Pr(e^{+}|c^{+}, \mathbf{q})-\Pr(e^{+}|c^{-}, \mathbf{q})|$ for each subpopulation $\mathbf{q}$, and makes the judgment of non-discrimination based on the criterion. The procedure of the algorithm is shown in Algorithm \ref{alg:dd}.

\begin{algorithm}\small
\caption{Certifying of Non-Discrimination (Certify)}
\label{alg:dd}
\begin{algorithmic}[1]
\REQUIRE dataset $\mathcal{D}$, protected attribute $C$, decision $E$, user-defined parameter $\tau$
\ENSURE judgment of non-discrimination $judge$, parents of $E$ except $C$ $\mathbf{Q}$
		\STATE $\mathbf{Q}= findParent(E)\backslash \{C\}$
		\FORALL{value assignment $\mathbf{q}$ of $\mathbf{Q}$} 
			\STATE $|\Delta P|_{\mathbf{q}}|=|\Pr(e^{+}|c^{+}, \mathbf{q})-\Pr(e^{+}|c^{-}, \mathbf{q})|$
			\IF{$|\Delta P|_{\mathbf{q}}|\geq \tau$} 
				\RETURN $[false,\mathbf{Q}]$
			\ENDIF
		\ENDFOR
		\RETURN $[true,\mathbf{Q}]$
\end{algorithmic}
\end{algorithm}

The complexity from Line 2 to 8 is $O(|\mathbf{Q}|)$, where $|\mathbf{Q}|$ is the number of value assignments of $\mathbf{Q}$. The function $findParent(E)$ (Line 1) finds the parents of $E$ in a causal graph. A straightforward way is to first build a causal graph from the dataset using a structure learning algorithm (e.g., the classic PC algorithm), then find the parents of $E$ in the graph. 
The complexity of the PC algorithm
is bounded by the largest degree in the
undirected graph. In the worst case, the number of conditional
independence tests required by the algorithm is bounded by
$\frac{n^2{(n-1)}^{k-1}}{(k-1)!}$ where $k$ is the maximal degree of
any vertex and $n$ is the number of vertices.
However, in our algorithm we only need to identify the parents of $E$ without the need of building the complete network.
Thus, we can use local causal discovery algorithms such as the Markov blanket \cite{statnikov2013algorithms} to determine the local structure for the decision attribute $E$. 
We leave this part as our future work.

\subsection{Discrimination Removal}
When non-discrimination is not claimed, the discriminatory effects need to be removed by modifying the data before it is used for predictive analysis (e.g., building a discrimination-free classifier). Since the modification makes the data distorted, it may cause losses in data utility when compared with the original data. Thus, a general requirement in discrimination removal is to maximize the utility of the modified data while achieving non-discrimination. 
A naive approach such as used in \cite{feldman2015certifying} would be totally removing the protected attribute from the dataset to eliminates discrimination. However, as we shall show in the experiments, in this way the data utility would be greatly suffered. In this section, we propose two strategies that exactly remove discrimination while retaining good data utility. 

\subsubsection{Discrimination Removal by Modifying Causal Graph}
The first strategy modifies the constructed causal graph and uses it to generate a new dataset. Specifically, it modifies the CPT of $E$, i.e., $\Pr(e|c,\mathbf{q})$, to obtain a new CPT ${\Pr}'(e|c,\mathbf{q})$, to meet the non-discrimination criterion given by Theorem \ref{thm:snd}, i.e., $|{\Pr}'(e^{+}|c^{+},\mathbf{q})-{\Pr}'(e^{+}|c^{-},\mathbf{q})| < \tau$ for all subpopulations $\mathbf{q}$. The CPTs of all the other nodes are kept unchanged. The joint distribution of the causal graph after the modification can be calculated using the factorization formula (\ref{eq:ff}). After that, the algorithm generates a new dataset based on the modified joint distribution. Since the structure of the causal graph is not changed after the modification, $\mathbf{Q}$ is still the parent set of $E$ excluding $C$. Thus, according to Theorem \ref{thm:snd}, the newly generated dataset satisfies the non-discrimination criterion.

To achieve a good data utility, we minimize the difference between the original distribution (denoted by $P$) and the modified distribution (denoted by $P'$). We use the Euclidean distance, i.e., $d(P',P)=\sqrt{\sum_{\mathbf{V}}({\Pr}'(\mathbf{v})-\Pr(\mathbf{v}))^{2}}$, to measure the distance between the two distributions. We sort the nodes according to the topological ordering of the graph, and represent the sorted nodes as $\{C,\mathbf{X},E,\mathbf{Y}\}$. Note that we must have $\mathbf{Q}\subseteq \mathbf{X}$. Then, using the factorization formula (\ref{eq:ff}), $d(P',P)$ can be formulated as
\begin{equation*}
d(P',P) = \sqrt{\sum_{C,\mathbf{Q},E} \beta_{\mathbf{q}}^{c,e} \cdot \big( {\Pr}'(e|c,\mathbf{q})-\Pr(e|c,\mathbf{q}) \big)^{2}},
\end{equation*}
where $\beta_{\mathbf{q}}^{c,e} = \sum_{\mathbf{x}',\mathbf{y}}\big( \Pr(c)\Pr(\mathbf{x}|c)\Pr(\mathbf{y}|c,\mathbf{x},e) \big)^{2}$ and $\mathbf{X}'=\mathbf{X}\backslash \mathbf{Q}$. Thus, the optimal solution (denoted by $\Pr^{*}(e|c,\mathbf{q})$) that minimizes $d(P',P)$ can be obtained by solving the following quadratic programming problem.
\begin{equation*}
\begin{split}
\textrm{minimize} & \qquad \sum_{C,\mathbf{Q},E} \beta_{\mathbf{q}}^{c,e} \cdot \big( {\Pr}'(e|c,\mathbf{q})-\Pr(e|c,\mathbf{q}) \big)^{2} \\
\textrm{subject to} & \qquad \forall \mathbf{q},\quad |{\Pr}'(e^{+}|c^{+},\mathbf{q})-{\Pr}'(e^{+}|c^{-},\mathbf{q})| < \tau, \\
& \qquad \forall c,\mathbf{q},\quad {\Pr}'(e^{-}|c,\mathbf{q})+{\Pr}'(e^{+}|c,\mathbf{q}) = 1, \\
& \qquad \forall c,\mathbf{q},e,\quad {\Pr}'(e|c,\mathbf{q}) > 0.
\end{split}
\end{equation*}
The procedure of the algorithm is shown in Algorithm \ref{alg:mg}.

\begin{algorithm}[htb]\small
\caption{Removal by Modifying Graph (MGraph)}
\label{alg:mg}
\begin{algorithmic}[1]
\REQUIRE dataset $\mathcal{D}$, protected attribute $C$, decision $E$, user-defined parameter $\tau$
\ENSURE modeified dataset $\mathcal{D}^{*}$
		\STATE $[judge,\mathbf{Q}]=Certify(\mathcal{D},C,E,\tau)$
		\IF{$judge==true$} 
			\STATE $\mathcal{D}^{*}=\mathcal{D}$
		\ELSE
			\STATE Calculate the modified CPT of $E$: $\Pr^{*}(e|c,\mathbf{q})$
			\FORALL{$X\in \mathbf{V}\backslash \{E\}$} 
				\STATE $\Pr^{*}(x|\mathrm{Par}(X)) = \Pr(x|\mathrm{Par}(X))$
			\ENDFOR
			\STATE Calculate $P^{*}$ using Equation \eqref{eq:ff}
			\STATE Generate $\mathcal{D}^{*}$ based on $P^{*}$
		\ENDIF
		\RETURN $\mathcal{D}^{*}$
\end{algorithmic}
\end{algorithm}


The complexity of Algorithm \ref{alg:mg} depends on the complexity of building the causal graph and solving the quadratic programming. The complexity of building a causal graph has been discussed in Section \ref{sec:dda}. Note that since deriving the objective function needs information of the whole network, local causal discovery cannot be used to improve the algorithm. For the quadratic programming, it can be easily shown that, the coefficients of the quadratic terms in the objective function form a positive definite matrix. According to \cite{kozlov1980polynomial}, the quadratic programming can be solved in polynomial time. 

\subsubsection{Discrimination Removal by Modifying Dataset}
The second strategy directly modifies the decisions of selected tuples from the dataset to meet the non-discrimination criterion. For each value assignment $\mathbf{q}$, if $\Delta P|_{\mathbf{q}}\geq \tau$, we randomly select a number of tuples with $C=c^{-}$ and $E=e^{-}$, and change their $E$ values from $e^{-}$ to $e^{+}$. If $\Delta P|_{\mathbf{q}}\leq -\tau$, we select tuples similarly and change their $E$ values from $e^{+}$ to $e^{-}$. As result, we ensure that for each $\mathbf{q}$ we have $|\Delta P|_{\mathbf{q}}|\leq \tau$. 

For any $E$'s non-decedent $X$, according to the Markov condition, $X$ is independent of $E$ in each subpopulation specified by $E$'s parents, i.e., $C$ and $\mathbf{Q}$. Since the modified tuples are randomly selected in the subpopulation specified by $C$ and $\mathbf{Q}$, $X$ would still be independent of $E$ after the modification. Thus, all $E$'s non-decedents would be conditionally independent of $E$ given $C$ and $\mathbf{Q}$, implying that $\mathbf{Q}$ is still the parent set of $E$ excluding $C$ after the modification. According to Theorem \ref{thm:snd}, the modified dataset satisfies the non-discrimination criterion.


\setlength{\tabcolsep}{2.5pt}

\begin{table}[htb]
\scriptsize
\centering
\caption{Contingency table within subpopulation $\mathbf{q}$.}
\begin{tabular}{c c c c}
\cline{1-4}
& positive decision ($e^{+}$) & negative decision ($e^{-}$) & total \\
\cline{1-4}
protected group ($c^{-}$) & $n_{\mathbf{q}}^{c^{-}e^{+}}$ & $n_{\mathbf{q}}^{c^{-}e^{-}}$ & $n_{\mathbf{q}}^{c^{-}}$ \\
non-protected group ($c^{+}$) & $n_{\mathbf{q}}^{c^{+}e^{+}}$ & $n_{\mathbf{q}}^{c^{+}e^{-}}$ & $n_{\mathbf{q}}^{c^{+}}$ \\
\cline{1-4}
total & $n_{\mathbf{q}}^{e^{+}}$ & $n_{\mathbf{q}}^{e^{-}}$ & $n_{\mathbf{q}}$\\
\cline{1-4}
\end{tabular}
\label{tab:contingencyTable}
\end{table}

To calculate the number of tuples to be modified within each subpopulation $\mathbf{q}$, we express $\Delta P|_{\mathbf{q}}$ as $n_{\mathbf{q}}^{c^{+}e^{+}}/n_{\mathbf{q}}^{c^{+}}-n_{\mathbf{q}}^{c^{-}e^{+}}/n_{\mathbf{q}}^{c^{-}}$. Please refer to Table~\ref{tab:contingencyTable} for the meaning of the notations. For subpopulations with $\Delta P|_{\mathbf{q}}\geq \tau$, by selecting $\lceil n_{\mathbf{q}}^{c^{-}}\cdot (|\Delta P|_{\mathbf{q}}|-\tau) \rceil$ tuples with $C=c^{-}$ and $E=e^{-}$, and changing their $E$ values from $e^{-}$ to $e^{+}$, the value of $\Delta P|_{\mathbf{q}}$ would decrease by $\lceil n_{\mathbf{q}}^{c^{-}}\cdot (|\Delta P|_{\mathbf{q}}|-\tau)\rceil / n_{\mathbf{q}}^{c^{-}} \geq \Delta P|_{\mathbf{q}}-\tau$. Therefore, we have $\Delta P|_{\mathbf{q}} < \tau$ after the modification. The result is similar when $\Delta P|_{\mathbf{q}}\leq -\tau$. The pseudo-code of the algorithm is shown in Algorithm \ref{alg:md}.

\begin{algorithm}[htb]\small
\caption{Removal by Modifying Data (MData)}
\label{alg:md}
\begin{algorithmic}[1]
\REQUIRE dataset $\mathcal{D}$, protected attribute $C$, decision $E$, user-defined parameter $\tau$
\ENSURE modeified dataset $\mathcal{D}^{*}$
		\STATE $[judge,\mathbf{Q}]=Certify(\mathcal{D},C,E,\tau)$
		\IF{$judge==true$} 
			\STATE $\mathcal{D}^{*}=\mathcal{D}$
		\ELSE
			\FORALL{value assignment $\mathbf{q}$ of $\mathbf{Q}$} 
				\IF{$\Delta P|_{\mathbf{q}}>\tau$} 
					\STATE randomly select a set $\mathbf{T}$ of $\lceil n_{\mathbf{q}}^{c^{-}}\cdot (|\Delta P|_{\mathbf{q}}|-\tau) \rceil$ tuples with $C=c^{-}$ and $E=e^{-}$ in subpopulation $\mathbf{q}$, and change the values of $E$s from $e^{-}$ to $e^{+}$ to get the set $\mathbf{T}^{*}$ of the modified tuples
				\ELSIF{$\Delta P|_{\mathbf{q}}<-\tau$} 
					\STATE randomly select a set $\mathbf{T}$ of $\lceil n_{\mathbf{q}}^{c^{-}}\cdot (|\Delta P|_{\mathbf{q}}|-\tau) \rceil$ tuples with $C=c^{-}$ and $E=e^{+}$ in subpopulation $\mathbf{q}$, and change the values of $E$s from $e^{+}$ to $e^{-}$ to get the set $\mathbf{T}^{*}$ of the modified tuples
				\ENDIF
				\STATE $\mathcal{D}^{*}=\mathcal{D}^{*}\backslash \mathbf{T}\cup \mathbf{T}^{*}$
			\ENDFOR
		\ENDIF
		\RETURN $\mathcal{D}^{*}$
\end{algorithmic}
\end{algorithm}

The complexity of Algorithm \ref{alg:md} includes the complexity of finding $\mathbf{Q}$. Similar to Algorithm \ref{alg:dd}, we can identify $E$'s parents without building the whole network. Therefore, local discovery algorithms can be employed to improve the efficiency of algorithm. The complexity from Line 5 to 14 is bounded by the size of the original dataset, i.e., $O(|\mathcal{D}|)$.


\section{Relaxed Non-Discrimination Criterion}\label{sec:rnc}
So far, we treat dataset $\mathcal{D}$ as the whole population. In real situations, $\mathcal{D}$ may be a sample of the whole population, and $\Delta P|_{\mathbf{b}}$s under a block set $\mathbf{B}$ may vary from one subpopulation to another due to randomness in sampling, especially when the sample size is small. The $|\Delta P|_{\mathbf{b}}|$ values of a few $\mathbf{b}$ could be larger than $\tau$ due to the small sample size although the majority of $|\Delta P|_{\mathbf{b}}|$ values are smaller than $\tau$. In this situation, the dataset is claimed as containing discrimination based on the above criterion where all $|\Delta P|_{\mathbf{b}}|$s should be smaller than $\tau$ no matter of the majority of $\Delta P|_{\mathbf{b}}$ values.

In this section, we propose a relaxed $\alpha$-non-discrimination criterion which may perform better under the context of randomness and small samples by finding statistical evidences. Formally, for a given block set $\mathbf{B}$, we treat $\Delta P|_{\mathbf{B}}$ as a variable and treat the values of $\Delta P|_{\mathbf{b}}$s observed across all subpopulations as samples. We introduce a user-defined parameter, $\alpha$ ($0 < \alpha < 1$), to indicate a threshold for the probability of $|\Delta P|_{\mathbf{B}}| < \tau$. If $|\Pr(\Delta P|_{\mathbf{B}}| < \tau)\geq \alpha$, then we say no significant bias is observed under partition $\mathbf{B}$. If $\Pr(|\Delta P|_{\mathbf{B}}| < \tau)\geq \alpha$ holds for each block set, then $\alpha$-non-discrimination can be claimed for $\mathcal{D}$.

\begin{Definition}\label{def:rnd}
Given $\alpha$, $\alpha$-non-discrimination is claimed if $\Pr(|\Delta P|_{\mathbf{B}}| < \tau)\geq \alpha$ holds for each block set $\mathbf{B}$.
\end{Definition}
One challenge here is that we do not know the exact distribution of $\Delta P|_{\mathbf{B}}$ for estimating $\Pr(|\Delta P|_{\mathbf{B}}| < \tau)$ accurately. We propose to employ the Chebyshev's inequality \cite{anastassiou2009probabilistic}, which provides a lower bound of the probability for the value of a random variable lying within a given region, using its mean and variance. Note that the Chebyshev's inequality holds for any random variable irrespective of its distribution. The general form of the Chebyshev's inequality is given as follows.
\begin{theorem}[Chebyshev's inequality]
Let $X$ be a random variable with finite expected value $\mu$ and finite non-zero variance $\sigma^{2}$. Then for any real numbers $a < b$,
\begin{equation*}
\Pr(a<X<b) \geq 1-\frac{\sigma^{2}+(\mu-\frac{b+a}{2})^{2}}{(\frac{b-a}{2})^{2}}.
\end{equation*}
\end{theorem}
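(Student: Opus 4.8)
The plan is to reduce the statement to the classical Markov/Chebyshev bound by re-centering the event $\{a < X < b\}$ at the interval's midpoint. Write $m = \frac{a+b}{2}$ for the midpoint and $t = \frac{b-a}{2} > 0$ for the half-width, so that $\{a < X < b\}$ is exactly the event $\{\,|X - m| < t\,\}$, whose complement is $\{\,(X-m)^2 \geq t^2\,\}$. The claimed inequality is therefore equivalent to the upper bound $\Pr\big((X-m)^2 \geq t^2\big) \leq \big(\sigma^2 + (\mu - m)^2\big)/t^2$, and that is what I would establish.

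The key steps, in order, are: (i) apply Markov's inequality to the non-negative random variable $(X-m)^2$, which gives $\Pr\big((X-m)^2 \geq t^2\big) \leq \mathbb{E}\big[(X-m)^2\big]/t^2$; (ii) compute the second moment about the midpoint by writing $(X-m)^2 = \big((X-\mu)+(\mu-m)\big)^2$ and taking expectations — the cross term $2(\mu-m)\,\mathbb{E}[X-\mu]$ vanishes because $\mathbb{E}[X] = \mu$, leaving $\mathbb{E}\big[(X-m)^2\big] = \sigma^2 + (\mu-m)^2$; (iii) chain (i) and (ii), pass back to the complementary event $\{\,|X-m| < t\,\} = \{a<X<b\}$, and substitute $m = \frac{a+b}{2}$ and $t = \frac{b-a}{2}$ to recover the displayed form. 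Finiteness of $\mu$ and $\sigma^2$ is precisely what makes every expectation above well-defined, and $\sigma^2 \neq 0$ together with $a < b$ (so $t > 0$) keeps the denominators nonzero.

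There is no deep obstacle here; the point that most needs care is aligning the strict and non-strict inequalities. Markov's inequality controls $\Pr\{(X-m)^2 \geq t^2\}$, and the complement of that event is exactly the strict-inequality event $\{a < X < b\}$, so nothing is lost at the endpoints and the strict inequality in the statement is handled for free. It is also worth noting that the bound is informative only when $\sigma^2 + (\mu-m)^2 < t^2$; otherwise the right-hand side is $\leq 0$ and the inequality is trivial, so no case analysis is needed and the argument never silently assumes the interval is wide. Finally, a one-line remark could record that the bound is tightest when $m$ is close to $\mu$: in the use made of it through Definition~\ref{def:rnd} the interval is $(-\tau,\tau)$ with midpoint $0$, so $\alpha$-non-discrimination is easiest to certify exactly when $\Delta P|_{\mathbf{B}}$ has mean near $0$ and small variance across subpopulations.
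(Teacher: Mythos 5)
Your proof is correct and complete: the paper itself offers no proof of this statement (it is quoted as a standard result with a citation), and your derivation --- rewriting $\{a<X<b\}$ as $\{|X-m|<t\}$ with $m=\frac{a+b}{2}$, $t=\frac{b-a}{2}$, applying Markov's inequality to $(X-m)^2$, and computing $\mathrm{E}[(X-m)^2]=\sigma^2+(\mu-m)^2$ via the vanishing cross term --- is exactly the standard argument one would supply. Your remarks on the strict/non-strict inequality alignment and on the bound being vacuous when $\sigma^2+(\mu-m)^2\geq t^2$ are accurate, and the observation that the bound is tightest for $\mu$ near the midpoint correctly explains why the paper's application to the symmetric interval $(-\tau,\tau)$ favors small $\hat{\mu}_{\mathbf{Q}}$ and $\hat{\sigma}_{\mathbf{Q}}^2$, consistent with Theorem \ref{thm:rnd}.
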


The following theorem shows a sufficient condition to satisfy Definition \ref{def:rnd} using the Chebyshev's inequality.

\begin{theorem}\label{thm:rnd}
Given $\alpha$, $\alpha$-non-discrimination is claimed if the following inequality holds for each block set $\mathbf{B}$:
\begin{equation*}
1-\frac{\sigma_{\mathbf{B}}^{2}+\mu_{\mathbf{B}}^{2}}{\tau^{2}} \geq \alpha,
\end{equation*}
where $\mu_{\mathbf{B}}$ and $\sigma_{\mathbf{B}}^{2}$ are mean and variance of $\Delta P|_{\mathbf{B}}$.
\end{theorem}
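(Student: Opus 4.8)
The plan is to obtain the theorem as a direct consequence of Chebyshev's inequality, specialized to the random variable $\Delta P|_{\mathbf{B}}$ and the symmetric interval $(-\tau,\tau)$. Fix an arbitrary block set $\mathbf{B}$ and regard $\Delta P|_{\mathbf{B}}$ as a random variable whose realizations are the values $\Delta P|_{\mathbf{b}}$ observed over the subpopulations $\mathbf{b}$; by the setup of Section \ref{sec:rnc} it has finite mean $\mu_{\mathbf{B}}$ and finite variance $\sigma_{\mathbf{B}}^{2}$.

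First I would instantiate the Chebyshev's inequality stated above with $X=\Delta P|_{\mathbf{B}}$, $a=-\tau$, and $b=\tau$. Then the midpoint $\frac{b+a}{2}$ is $0$ and the half-width $\frac{b-a}{2}$ is $\tau$, so the bound specializes to
\[
\Pr(-\tau<\Delta P|_{\mathbf{B}}<\tau)\;\geq\;1-\frac{\sigma_{\mathbf{B}}^{2}+\mu_{\mathbf{B}}^{2}}{\tau^{2}}.
\]
Next I would note that the event $\{-\tau<\Delta P|_{\mathbf{B}}<\tau\}$ coincides with $\{|\Delta P|_{\mathbf{B}}|<\tau\}$, so the left-hand side is exactly $\Pr(|\Delta P|_{\mathbf{B}}|<\tau)$. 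Combining this with the hypothesis $1-\frac{\sigma_{\mathbf{B}}^{2}+\mu_{\mathbf{B}}^{2}}{\tau^{2}}\geq\alpha$ yields $\Pr(|\Delta P|_{\mathbf{B}}|<\tau)\geq\alpha$. Since $\mathbf{B}$ was arbitrary, this holds for every block set, which is precisely the condition in Definition \ref{def:rnd}; hence $\alpha$-non-discrimination is claimed.

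There is essentially no hard step: the argument is a single substitution into a known inequality. The only points deserving a word of care are (i) that Chebyshev's bound is distribution-free, so nothing is assumed about the law of $\Delta P|_{\mathbf{B}}$ — which is exactly the property that makes the relaxed criterion usable in the small-sample regime motivating Section \ref{sec:rnc}; (ii) the matching of strict inequalities, since Chebyshev is stated for the open interval $a<X<b$ and Definition \ref{def:rnd} uses the strict bound $|\Delta P|_{\mathbf{B}}|<\tau$; and (iii) the degenerate case $\sigma_{\mathbf{B}}^{2}=0$, where the inequality as stated does not apply, but then all $\Delta P|_{\mathbf{b}}$ equal $\mu_{\mathbf{B}}$, and the hypothesis $1-\mu_{\mathbf{B}}^{2}/\tau^{2}\geq\alpha>0$ forces $|\mu_{\mathbf{B}}|<\tau$, so $\Pr(|\Delta P|_{\mathbf{B}}|<\tau)=1\geq\alpha$ trivially. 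I would mention (iii) only briefly, as an edge case.
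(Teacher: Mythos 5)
Your argument is exactly the paper's: substitute $X=\Delta P|_{\mathbf{B}}$, $a=-\tau$, $b=\tau$ into Chebyshev's inequality, identify the resulting lower bound with $\Pr(|\Delta P|_{\mathbf{B}}|<\tau)$, and invoke Definition \ref{def:rnd}. The extra remarks on strict inequalities and the degenerate $\sigma_{\mathbf{B}}^{2}=0$ case are fine but do not change the route.
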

The proof is straightforward by replacing $X$ with $\Delta P|_{\mathbf{B}}$, $a$ with $-\tau$, and $b$ with $\tau$ in the Chebyshev's inequality.

We show Theorem \ref{thm:rnd} can be achieved by examining $\mathbf{Q}$ only.
\begin{theorem}\label{thm:srnd}
Given $\alpha$, $\alpha$-non-discrimination is claimed if the following inequalities holds for set $\mathbf{Q}$:
\begin{equation*}
1-\frac{\hat{\sigma}_{\mathbf{Q}}^{2}+\hat{\mu}_{\mathbf{Q}}^{2}}{\tau^{2}} \geq \alpha,
\end{equation*}
where $\hat{\mu}_{\mathbf{B}}=\sum_{\mathbf{B}}\Pr(\mathbf{b})\cdot \Delta P|_{\mathbf{b}}$ and $\hat{\sigma}_{\mathbf{B}}^{2}=\sum_{\mathbf{B}}\Pr(\mathbf{b})(\Delta P|_{\mathbf{b}}-\hat{\mu}_{\mathbf{B}})^{2}$.
\end{theorem}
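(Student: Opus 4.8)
The plan is to reduce the claim for an arbitrary block set $\mathbf{B}$ to the corresponding statement for $\mathbf{Q}$ via Lemma \ref{thm:qd}, exactly as in the proof of Theorem \ref{thm:snd}. The target is: if $1-(\hat\sigma_{\mathbf{Q}}^2+\hat\mu_{\mathbf{Q}}^2)/\tau^2\ge\alpha$, then $1-(\sigma_{\mathbf{B}}^2+\mu_{\mathbf{B}}^2)/\tau^2\ge\alpha$ for every block set $\mathbf{B}$, and then Theorem \ref{thm:rnd} finishes the job. Since $\tau$ and $\alpha$ are fixed, this is equivalent to showing $\hat\sigma_{\mathbf{B}}^2+\hat\mu_{\mathbf{B}}^2 \le \hat\sigma_{\mathbf{Q}}^2+\hat\mu_{\mathbf{Q}}^2$ for all block sets $\mathbf{B}$; here I read $\mu_{\mathbf{B}},\sigma_{\mathbf{B}}^2$ in Theorem \ref{thm:rnd} as the probability-weighted mean and variance $\hat\mu_{\mathbf{B}},\hat\sigma_{\mathbf{B}}^2$ defined in Theorem \ref{thm:srnd}. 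Equivalently, writing $m_2(\mathbf{B}) := \hat\sigma_{\mathbf{B}}^2+\hat\mu_{\mathbf{B}}^2 = \sum_{\mathbf{b}}\Pr(\mathbf{b})\,(\Delta P|_{\mathbf{b}})^2$ for the (raw) second moment of $\Delta P$ under the partition $\mathbf{B}$, it suffices to prove the single inequality
\begin{equation*}
m_2(\mathbf{B}) \;=\; \sum_{\mathbf{b}}\Pr(\mathbf{b})\,(\Delta P|_{\mathbf{b}})^2 \;\le\; \sum_{\mathbf{q}}\Pr(\mathbf{q})\,(\Delta P|_{\mathbf{q}})^2 \;=\; m_2(\mathbf{Q}).
\end{equation*}

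First I would invoke Lemma \ref{thm:qd}: for each subpopulation $\mathbf{b}$, $\Delta P|_{\mathbf{b}} = \sum_{\mathbf{q}'}\Pr(\mathbf{q}'\mid\mathbf{b})\,\Delta P|_{\mathbf{q}}$, a convex combination of the $\Delta P|_{\mathbf{q}}$ over the compatible assignments $\mathbf{q}$ (those agreeing with $\mathbf{b}$ on $\mathbf{Q}\cap\mathbf{B}$, with $\mathbf{q}'$ ranging over $\mathbf{Q}\setminus\mathbf{B}$). Applying Jensen's inequality to the convex function $t\mapsto t^2$ gives $(\Delta P|_{\mathbf{b}})^2 \le \sum_{\mathbf{q}'}\Pr(\mathbf{q}'\mid\mathbf{b})\,(\Delta P|_{\mathbf{q}})^2$. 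Then I would multiply by $\Pr(\mathbf{b})$ and sum over all $\mathbf{b}$:
\begin{equation*}
\sum_{\mathbf{b}}\Pr(\mathbf{b})(\Delta P|_{\mathbf{b}})^2 \;\le\; \sum_{\mathbf{b}}\sum_{\mathbf{q}'}\Pr(\mathbf{b})\Pr(\mathbf{q}'\mid\mathbf{b})\,(\Delta P|_{\mathbf{q}})^2 \;=\; \sum_{\mathbf{b}}\sum_{\mathbf{q}'}\Pr(\mathbf{q}',\mathbf{b})\,(\Delta P|_{\mathbf{q}})^2.
\end{equation*}
The key bookkeeping point is that the pair $(\mathbf{q}',\mathbf{b})$, where $\mathbf{q}'$ is an assignment to $\mathbf{Q}\setminus\mathbf{B}$ and $\mathbf{b}$ an assignment to $\mathbf{B}$, jointly determines a full assignment $\mathbf{q}$ to $\mathbf{Q}$ (the $\mathbf{Q}\cap\mathbf{B}$ coordinates come from $\mathbf{b}$, the rest from $\mathbf{q}'$), and as $(\mathbf{q}',\mathbf{b})$ ranges over all compatible pairs, $\mathbf{q}$ ranges over all assignments of $\mathbf{Q}$ while $\mathbf{b}$ additionally ranges over assignments of $\mathbf{B}\setminus\mathbf{Q}$. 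Marginalizing out $\mathbf{B}\setminus\mathbf{Q}$ collapses $\sum\Pr(\mathbf{q}',\mathbf{b})$ grouped by $\mathbf{q}$ into $\Pr(\mathbf{q})$, yielding $\sum_{\mathbf{q}}\Pr(\mathbf{q})(\Delta P|_{\mathbf{q}})^2 = m_2(\mathbf{Q})$, which is the desired bound. Finally I would note $\mathbf{Q}$ itself is a block set by Lemma \ref{thm:q}, so the hypothesis on $\mathbf{Q}$ is the $\mathbf{B}=\mathbf{Q}$ instance of the Theorem \ref{thm:rnd} condition, making the reduction consistent.

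The main obstacle is purely the index-juggling in the second display: making precise how the double sum over $(\mathbf{b},\mathbf{q}')$ reorganizes into a sum over $\mathbf{q}$ (with the overlap coordinates $\mathbf{Q}\cap\mathbf{B}$ shared and the coordinates $\mathbf{B}\setminus\mathbf{Q}$ summed out), and confirming that $\sum_{\mathbf{q}'}\Pr(\mathbf{q}'\mid\mathbf{b})$ is a genuine probability distribution so that Jensen applies — this is where one must lean on the block-set definition and the structure of $\mathcal{G}'$ underlying Lemma \ref{thm:qd}. Everything else (convexity of the square, linearity of expectation, the reduction via Theorem \ref{thm:rnd}) is routine. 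One caveat worth a sentence in the final write-up: the inequality is one-directional, so Theorem \ref{thm:srnd} gives a sufficient condition for $\alpha$-non-discrimination checkable on $\mathbf{Q}$ alone, not an equivalence — consistent with the "if" phrasing of the statement.
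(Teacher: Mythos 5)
Your proposal is correct, and the engine is the same as the paper's: Lemma \ref{thm:qd} expresses $\Delta P|_{\mathbf{b}}$ as a convex combination of the $\Delta P|_{\mathbf{q}}$, a convexity inequality (your Jensen step is exactly the paper's ``Cauchy's inequality'' step) bounds the second moment, and the double sum over $(\mathbf{b},\mathbf{q}')$ is reorganized into a sum over $\mathbf{q}$ by marginalizing out $\mathbf{B}\setminus\mathbf{Q}$. The one genuine (if minor) difference is the decomposition: the paper proves two separate lemmas, $\hat{\mu}_{\mathbf{B}}=\hat{\mu}_{\mathbf{Q}}$ (Lemma \ref{thm:mu}) and $\hat{\sigma}_{\mathbf{B}}^{2}\leq \hat{\sigma}_{\mathbf{Q}}^{2}$ (Lemma \ref{thm:sigma}), and combines them, whereas you observe that the quantity appearing in the Chebyshev bound is exactly the raw second moment $\hat{\sigma}^{2}+\hat{\mu}^{2}=\sum\Pr(\cdot)(\Delta P)^{2}$ and prove the single inequality $m_2(\mathbf{B})\le m_2(\mathbf{Q})$ directly. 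Your route is slightly more economical --- it never needs mean equality as a standalone fact --- though the index bookkeeping you flag as the ``main obstacle'' is precisely the content of the paper's proof of Lemma \ref{thm:mu}, so no work is actually saved; conversely, the paper's split yields the reusable fact that all block sets share the same mean, which your version does not record. Your closing caveats (that $\mathbf{Q}$ is itself a block set by Lemma \ref{thm:q}, that the condition is only sufficient, and that $\mu_{\mathbf{B}},\sigma_{\mathbf{B}}^{2}$ in Theorem \ref{thm:rnd} must be read as the probability-weighted estimates $\hat{\mu}_{\mathbf{B}},\hat{\sigma}_{\mathbf{B}}^{2}$) are all consistent with the paper's intent.
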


\begin{proof}
The proof is straightforward by giving two lemmas:
\begin{lemma}\label{thm:mu}
For each block set $\mathbf{B}$, $\hat{\mu}_{\mathbf{B}}=\hat{\mu}_{\mathbf{Q}}$, where $\mathbf{Q}=\mathrm{Par}(E)\backslash \{C\}$.
\end{lemma}
\begin{lemma}\label{thm:sigma}
For each block set $\mathbf{B}$, $\hat{\sigma}_{\mathbf{B}}^{2}\leq \hat{\sigma}_{\mathbf{Q}}^{2}$, where $\mathbf{Q}=\mathrm{Par}(E)\backslash \{C\}$.
\end{lemma}
Refer to the Appendices \ref{sec:thm:mu} and \ref{sec:thm:sigma} for proof details.
\end{proof}

\section{Experiments}\label{sec:ee}
In this section, we conduct experiments for discrimination discovery and removal algorithms by using two real data sets: the Adult dataset \cite{adultdataset} and the Dutch Census of 2001 \cite{dutchdataset}, 
and compare our algorithms with the conditional discrimination removal methods proposed in \cite{zliobaite2011handling}.

The causal graphs are constructed by utilizing an open-source software TETRAD \cite{tetrad}, which is a widely used platform for causal modeling. We employ the original PC algorithm and set the significance threshold $0.01$ used for conditional independence testing in causal graph construction. The quadratic programming is solved using CVXOPT \cite{cvxopt}.
All experiments were conducted with a PC workstation with 16GB RAM and Intel Core i7-4770 CPU.

\subsection{Discrimination Discovery}

\begin{figure*}[ht]
	\centering
		\includegraphics[width=6in]{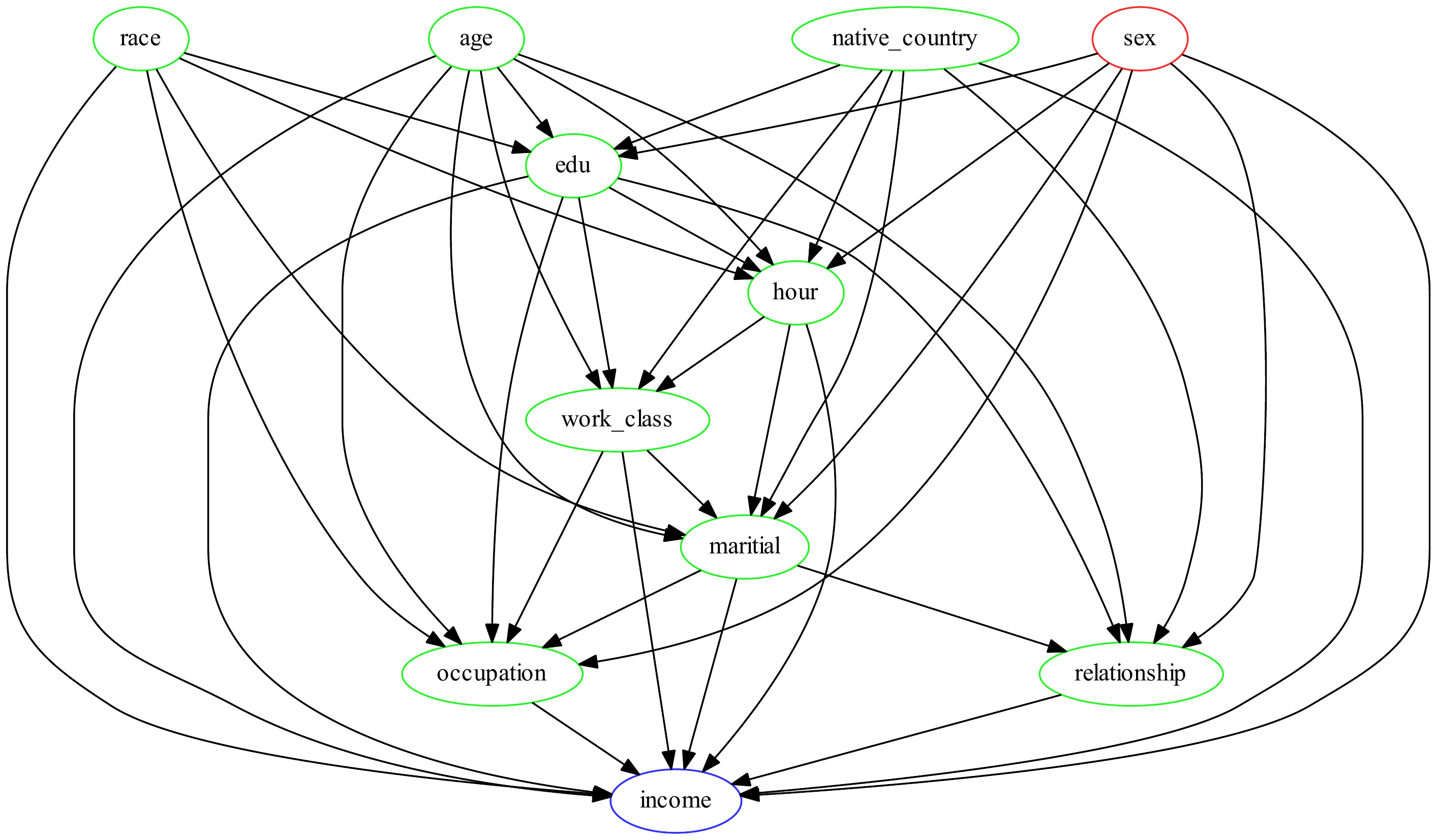}
	\caption{Causal graph for Adult dataset: the red node represents the protected attribute, the blue node represents the decision, the green nodes represent set $\mathbf{Q}$.}
	\label{fig:adult}
\end{figure*}

\begin{figure*}[ht]
	\centering
		\includegraphics[width=6in]{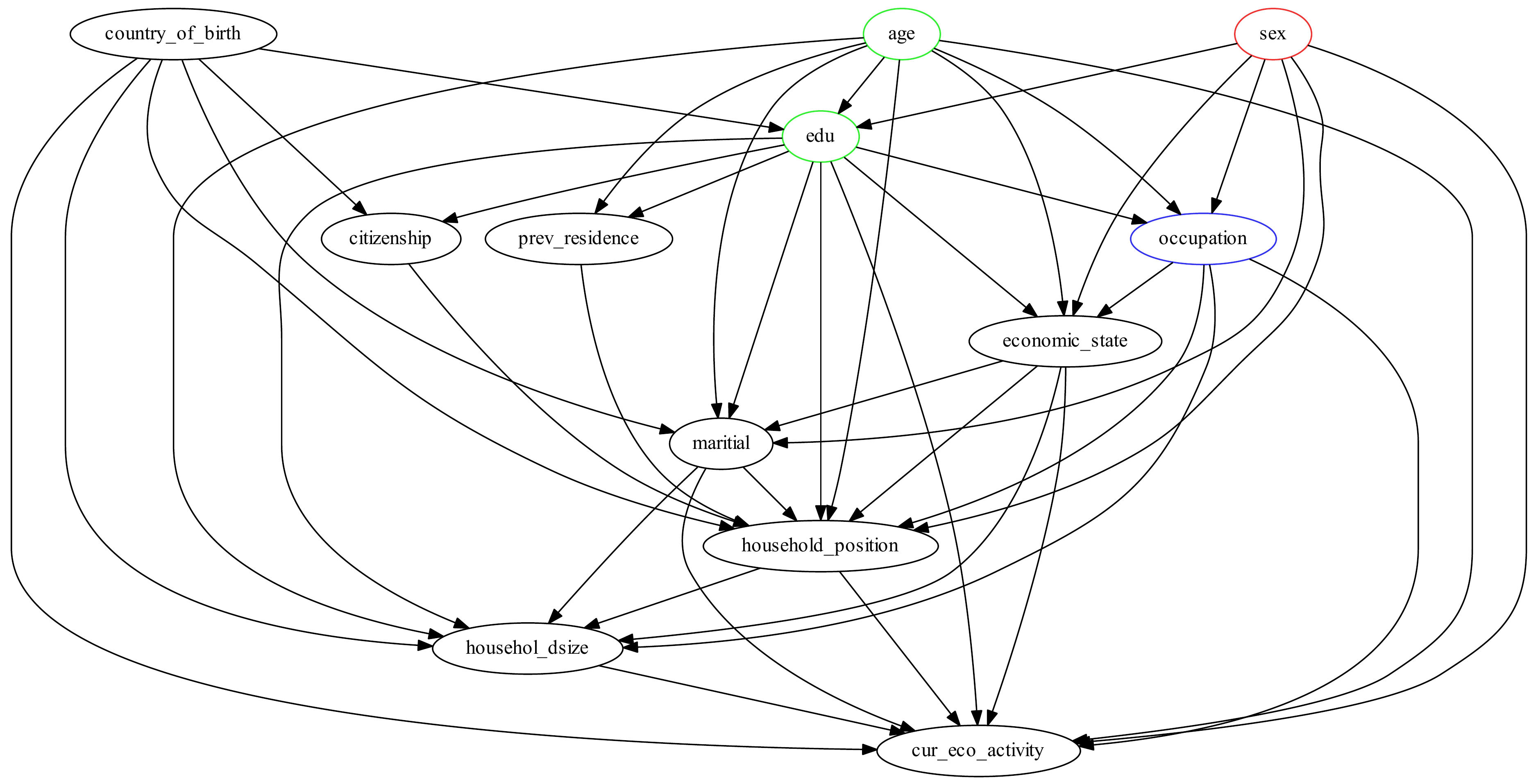}
	\caption{Causal graph for Dutch Census dataset: the red node represents the protected attribute, the blue node represents the decision, the green nodes represent set $\mathbf{Q}$, and the black nodes represent the others.}
	\label{fig:dutch}
\end{figure*}

The Adult dataset consists of $65123$ tuples with $11$ attributes such as \texttt{age}, \texttt{eduation}, \texttt{sex}, \texttt{occupation}, \texttt{income}, etc.. Since the computational complexity of the PC algorithm is an exponential function of the number of attributes and their domain sizes, for computational feasibility we binarize each attribute's domain values into two classes to reduce the domain sizes. 
We use three tiers in the partial order for temporal priority: \texttt{sex}, \texttt{age}, \texttt{native\_country}, \texttt{race} are defined in the first tier, \texttt{education} is defined in the second tier, and all other attributes are defined in the third tier. The constructed causal graph is shown in Figure \ref{fig:adult}.
We treat \texttt{sex} (female and male) as the protected attribute and \texttt{income} (low\_income and high\_income) as the decision. An arc pointing from \texttt{sex} to \texttt{income} is observed.
We first find set $\mathbf{Q}$ of \texttt{income}, which contains all the non-protected attributes. There are 512 subpopulations specified by $\mathbf{Q}$, and 376 subpopulations with non-zero number of tuples. Then, we compute $\Delta P|_{\mathbf{q}}$ for the 376 subpopulations. 
The value of $\Delta P|_{\mathbf{q}}$ ranges from $-0.85$ to $0.67$ across all subpopulations. Among them, 90 subpopulations have $\Delta P|_{\mathbf{q}} > 0.05$ and 49 subpopulations have $\Delta P|_{\mathbf{q}} < -0.05$, indicating the existence of discrimination in the Adult dataset.
Moreover, the mean and the standard variance of $\Delta P|_{\mathbf{q}}$ are $0.004$ and $0.129$, which 
 has small $\Pr(|\Delta P|_{\mathbf{q}}| < \tau )$ based on the Chebyshev's inequality, e.g., $\Pr(|\Delta P|_{\mathbf{q}}| < 0.15)\geq 25.97\%$. It 
 indicates that the non-discrimination cannot be claimed for the Adult dataset even under the relaxed $\alpha$-non-discrimination model with large $\tau$ and small $\alpha$.


Another dataset Dutch census consists of $60421$ tuples with $12$ attributes. Similarly, we binarize the domain values of attribute \texttt{age} due to its large domain size. Three tiers are used in the partial order for temporal priority: \texttt{sex}, \texttt{age}, \texttt{country\_birth} are defined in the first tire, \texttt{education\_level} is defined in the second tire, and all other attributes are defined in the third tire. The constructed causal graph is shown in Figure \ref{fig:dutch}. We treat \texttt{sex} (female and male) as the protected attribute and \texttt{occupation} (occupation\_w\_low\_income, occupation\_w\_high\_income) as the decision. An arc from \texttt{sex} to \texttt{occupation} is observed in the causal graph. Set $\mathbf{Q}$ of \texttt{occupation} is $\mathbf{Q}=\{\texttt{edu\_level},\texttt{age}\}$. The value of $\Delta P|_{\mathbf{q}}$ ranges from $0.062$ to $0.435$ across all the 12 subpopulations specified by $\mathbf{Q}$. Thus, discrimination against females is detected in the Dutch dataset based on the non-discrimination criterion. Moreover, the mean and the standard variance of $\Delta P|_{\mathbf{q}}$ are $0.222$ and $0.125$, which has small $\Pr(|\Delta P|_{\mathbf{q}}| < \tau )$ based on the Chebyshev's inequality, e.g., $\Pr(|\Delta P|_{\mathbf{q}}| < 0.30) \geq 27.94\%$. Hence, the Dutch census dataset still contains discrimination based on the relaxed $\alpha$-non-discrimination criterion.

Our current implementation uses the PC algorithm to construct the complete causal graph. 
In our experiment, the PC algorithm with the default significance threshold $0.01$ takes 51.59 seconds to build the graph for the binarized Adult dataset and 139.96 seconds for the binarized Dutch census dataset. We also run the PC algorithm on the original Adult dataset, which incurs 4492.36 seconds. In our future work, we will explore the use of the local causal discovery algorithms to improve the efficiency.

\subsection{Discrimination Removal}

\setlength{\tabcolsep}{4pt}

\begin{table}[htbp]
\small
	\centering
	\caption{Comparison of MGraph, MData, Naive, and two conditional discrimination removal algorithms (LM and LPS) on Adult and Dutch Census.} 
	\label{table:comparison}
	\begin{tabular}{|r|r|r||r|r|r|}
		\hline
		{\bf Adult} &    MGraph &  MData &   Naive &    LM &     LPS \\ \hline
		  $d(\times 10^{-3})$ & 	1.089&	1.28& 39.40& 50.99  & 35.22  \\ \hline	
		      $n_T$ &       676 &    790 &   29836 & 31506  &   14962 \\ \hline
		   $\chi^2$ &       210 &    422 & 19612 & 28943 & 11937  \\ \hline
		{\bf Dutch} &    MGraph &  MData &   Naive &    LM &     LPS \\ \hline
		  $d(\times 10^{-3})$ &  	5.09&	6.66& 13.64& 18.38 & 14.87 \\ \hline		
		      $n_T$ &      8776 &   8838 &  31908 &  30032  &   19998 \\ \hline
		   $\chi^2$ &      3478 &   8771 &  30990 &  30114  &  17209 \\ \hline
	\end{tabular}
\end{table}

The performance of our two proposed discrimination removal algorithms, MGraph and MData, in terms of the utility of the modified data is shown in Table \ref{table:comparison}.  We also report the results from the Naive method used in \cite{feldman2015certifying} in which we completely reshuffle the gender information. We measure the utility by three metrics: the Euclidean distance ($d$ ), the number of modified tuples ($n_T$), and the utility loss ($\chi^2$). We can observe from Table \ref{table:comparison} that the MGraph algorithm retains the highest utility. Both MGraph and MData algorithms significantly outperform the Naive method. We also examine how utility in terms of three metrics vary with different $\tau$ values for our MGraph and MData algorithms. 
We can see from Table~\ref{table:setting} that both discrimination removal algorithms incur less utility loss with larger $\tau$ values. This observation validates our analysis of non-discrimination model.

\setlength{\tabcolsep}{2.5pt}

\begin{table}[htb]
\small
\centering
\caption{Comparison of utility  with varied $\tau$ values for MGraph and MData.}
\begin{tabular}{|l|r|r|r|r|r|r|r|r|}
	\hline
	{\bf Adult} &  \multicolumn{4}{c|}{MGraph}   &    \multicolumn{4}{c|}{MData}     \\ \hline
	$\tau$      &  0.025 & 0.050 & 0.075 & 0.100 &  0.025 &  0.050 &  0.075 &  0.100 \\ \hline
	$d(\times 10^{-3})$   &   1.46&	1.08&	0.79&	0.56&	1.73&	1.28&	0.93&	0.68 \\ \hline	
	$n_T$       &   1046 &   676 &   476 &   326 &   1136 &    790 &    584 &    420 \\ \hline
	$\chi^2$    &    327 &   218 &   155 &   113 &    604 &    422 &    332 &    261 \\ \hline\hline
	{\bf Dutch} &  \multicolumn{4}{c|}{MGraph}   &    \multicolumn{4}{c|}{MData}     \\ \hline
	$\tau$      &  0.025 & 0.050 & 0.075 & 0.100 &  0.025 &  0.050 &  0.075 &  0.100 \\ \hline
	$d(\times 10^{-3})$   & 5.58&	5.09&	4.60&	4.14&	7.29&	6.66&	5.92&	5.28 \\ \hline
	$n_T$       &  10315 &  8776 &  7523 &  6595 &  10114 &   8838 &   7702 &   6800 \\ \hline
	$\chi^2$    &   4418 &  3478 &  2716 &  2132 &  11460 &   8771 &   6964 &   5658 \\ \hline
\end{tabular}
\label{table:setting}
\end{table}

We measure the execution times of our removal algorithms. As expected, MGraph takes longer time than MData since the former requires quadratic programming and data generation based on the whole modified graph while the latter only requires the information of $\mathbf{Q}$. For the Adult dataset with $\tau = 0.05$, MGraph takes 20.86s while MData takes 11.43s. For the Dutch dataset the difference is even larger, i.e., 735.83s for MGraph and 0.20s for MData, since the size of $\mathbf{Q}$ of Dutch census is much smaller.


\subsection{Comparison with conditional discrimination methods}
In \cite{zliobaite2011handling}, the authors measured the ``bad'' discrimination i.e., the effect that can be explained by conditioning on one attribute. They developed two methods, local massaging (LM) and local preferential sampling (LPS), to remove the unexplainable (bad) discrimination when one of the attributes is considered to be explanatory for the discrimination. However, their methods do not distinguish whether a partition is meaningful or not. Therefore, they cannot find the correct partitions to measure the direct discriminatory effects. Our experiments show that, their methods cannot completely remove discrimination conditioning on any single attribute. The results are skipped due to space limitation. In addition, even if we remove ``bad'' discrimination using their methods by conditioning on each attribute one by one, a significant amount of discriminatory effects still exist. After running the local massaging (LM) method, there are still 97 subpopultions (out of 376) with discrimination for Adult and 4 subpopulations (out of 12) with discrimination for Dutch census. The local preferential sampling (LPS) method performs even worse --- there are 108 subpopultions with discrimination for Adult and 8 subpopulations with discrimination for Dutch census. This is because for both datasets, any single attribute is not a block set and hence does not form a meaningful partition. Even assuming each attribute forms a meaningful partition, removing discrimination for each partition one by one does not guarantee to remove discrimination since the modification under one partition may change the distributions under other partitions. Differently, our approaches remove discrimination based on block set $\mathbf{Q}$ and ensure that the causal structure is not changed after the modification. Thus, Theorem \ref{thm:snd} can prove non-discrimination for our approaches. Furthermore, their methods incur much larger utility loss than our algorithms, as shown in the last two columns of Table \ref{table:comparison}.

\section{Related Work}\label{sec:rw}
A number of data mining techniques have been proposed to discover discrimination in the literature. Classification rule-based methods such as \emph{elift} \cite{pedreshi2008discrimination} and \emph{belift} \cite{mancuhan2014combating} were proposed to represent certain discrimination patterns. In \cite{luong2011k,zhang2016situation}, the authors dealt with the individual discrimination by finding a group of similar individuals. \v{Z}liobait\.{e} et al. \cite{zliobaite2011handling} proposed conditional discrimination. However, their approaches cannot determine whether a partition is meaningful and hence cannot achieve non-discrimination guarantee. Our work showed that the causal discriminatory effect through $C \rightarrow E$ can only be correctly measured under the partition specified by the block set. Recently, the authors in \cite{DBLP:journals/corr/BonchiHMR15} proposed a framework based on the Suppes-Bayes causal network and developed several random-walk-based methods to detect different types of discrimination. However, the construction of the Suppes-Bayes causal network is impractical with the large number of attribute-value pairs. In addition, it is unclear how the number of random walks is related to practical discrimination metrics, e.g., the difference in positive decision rates.

Proposed methods for discrimination removal are either based on data preprocessing \cite{kamiran2012data,zliobaite2011handling} or algorithm tweaking \cite{kamiran2010discrimination,calders2010three,kamishima2012fairness}. The authors \cite{dwork2012fairness} addressed the problem of fair classification that achieves both group fairness, i.e., the proportion of members in a protected group receiving positive classification is identical to the proportion in the population as a whole,  and individual fairness, i.e., similar individuals should be treated similarly. A recent work \cite{feldman2015certifying} studies how to remove disparate impact, i.e., indirect discrimination, from the data. The authors first ensures no direct discrimination by completely removing the protected attribute $C$ from the data. Then, they proposed to test disparate impact based on how well $C$ can be predicted with the non-protected attributes, and remove disparate impact by modifying the non-protected attributes. As shown by our experiments, removing $C$ from the data would significantly damage the data utility. Still, their work is based on correlations rather than the causation.

\section{Conclusions and Future Work}\label{sec:cc}
In this paper, we have investigated the problems of discovery and removal direct discrimination from historical decision data. With the support of the causal graph, we have shown that the discriminatory effect can only be identified under the partition defined by the block set. We have provided the graph condition for the block set. Based on that, we have developed a simple non-discrimination criterion and two strategies for removing discrimination. We also proposed a relaxed non-discrimination criterion to deal with sampling randomness in the data. The experiment results using real datasets show that our proposed approaches are effective in discovering and completely removing discrimination. Our work in this paper focuses on direct discrimination. We will investigate how to extend our work to modeling indirect discrimination using the causal graph and compare with the correlation-based indirect discrimination removal approach proposed in \cite{feldman2015certifying} in the future work.



\bibliographystyle{plain}
\bibliography{paper}

\begin{thebibliography}{10}

\bibitem{anastassiou2009probabilistic}
George~A Anastassiou.
\newblock {\em Probabilistic inequalities}.
\newblock World Scientific, 2009.

\bibitem{cvxopt}
Martin Andersen, Joachim Dahl, and Lieven Vandenberghe.
\newblock {CVXOPT}.
\newblock \url{http://cvxopt.org/}.

\bibitem{DBLP:journals/corr/BonchiHMR15}
Francesco Bonchi, Sara Hajian, Bud Mishra, and Daniele Ramazzotti.
\newblock Exposing the probabilistic causal structure of discrimination.
\newblock {\em CoRR}, abs/1510.00552, 2015.

\bibitem{calders2010three}
Toon Calders and Sicco Verwer.
\newblock Three naive bayes approaches for discrimination-free classification.
\newblock {\em Data Mining and Knowledge Discovery}, 21(2):277--292, 2010.

\bibitem{colombo2014order}
Diego Colombo and Marloes~H Maathuis.
\newblock Order-independent constraint-based causal structure learning.
\newblock {\em JMLR}, 15(1):3741--3782, 2014.

\bibitem{cormen2009introduction}
Thomas~H Cormen.
\newblock {\em Introduction to algorithms}.
\newblock MIT press, 2009.

\bibitem{dwork2012fairness}
Cynthia Dwork, Moritz Hardt, Toniann Pitassi, Omer Reingold, and Richard Zemel.
\newblock Fairness through awareness.
\newblock In {\em Proceedings of ITCS}, pages 214--226. ACM, 2012.

\bibitem{feldman2015certifying}
Michael Feldman, Sorelle~A Friedler, John Moeller, Carlos Scheidegger, and
  Suresh Venkatasubramanian.
\newblock Certifying and removing disparate impact.
\newblock In {\em KDD}, pages 259--268. ACM, 2015.

\bibitem{foster2004causation}
Sheila~R Foster.
\newblock Causation in antidiscrimination law: Beyond intent versus impact.
\newblock {\em Hous. L. Rev.}, 41:1469, 2004.

\bibitem{tetrad}
Clark Glymour et~al.
\newblock The {TETRAD} project.
\newblock \url{http://www.phil.cmu.edu/tetrad}, 2004.

\bibitem{hajian2013methodology}
Sara Hajian and Josep Domingo-Ferrer.
\newblock A methodology for direct and indirect discrimination prevention in
  data mining.
\newblock {\em JKDE}, 25(7):1445--1459, 2013.

\bibitem{kalisch2007estimating}
Markus Kalisch and Peter B{\"u}hlmann.
\newblock Estimating high-dimensional directed acyclic graphs with the
  pc-algorithm.
\newblock {\em Journal of Machine Learning Research}, 8:613--636, 2007.

\bibitem{kamiran2012data}
Faisal Kamiran and Toon Calders.
\newblock Data preprocessing techniques for classification without
  discrimination.
\newblock {\em KAIS}, 33(1):1--33, 2012.

\bibitem{kamiran2010discrimination}
Faisal Kamiran, Toon Calders, and Mykola Pechenizkiy.
\newblock Discrimination aware decision tree learning.
\newblock In {\em ICDM}, pages 869--874. IEEE, 2010.

\bibitem{kamishima2012fairness}
Toshihiro Kamishima, Shotaro Akaho, Hideki Asoh, and Jun Sakuma.
\newblock Fairness-aware classifier with prejudice remover regularizer.
\newblock In {\em ECML-PKDD}, pages 35--50. Springer, 2012.

\bibitem{koller2009probabilistic}
Daphne Koller and Nir Friedman.
\newblock {\em Probabilistic graphical models: principles and techniques}.
\newblock MIT press, 2009.

\bibitem{kozlov1980polynomial}
Mikhail~K Kozlov, Sergei~P Tarasov, and Leonid~G Khachiyan.
\newblock The polynomial solvability of convex quadratic programming.
\newblock {\em USSR Computational Mathematics and Mathematical Physics},
  20(5):223--228, 1980.

\bibitem{adultdataset}
M.~Lichman.
\newblock {UCI} machine learning repository.
\newblock \url{http://archive.ics.uci.edu/ml}, 2013.

\bibitem{luong2011k}
Binh~Thanh Luong, Salvatore Ruggieri, and Franco Turini.
\newblock k-nn as an implementation of situation testing for discrimination
  discovery and prevention.
\newblock In {\em KDD}, pages 502--510. ACM, 2011.

\bibitem{mancuhan2014combating}
Koray Mancuhan and Chris Clifton.
\newblock Combating discrimination using bayesian networks.
\newblock {\em Artificial intelligence and law}, 22(2):211--238, 2014.

\bibitem{neapolitan2004learning}
Richard~E Neapolitan et~al.
\newblock {\em Learning bayesian networks}, volume~38.
\newblock Prentice Hall Upper Saddle River, 2004.

\bibitem{dutchdataset}
Statistics Netherlands.
\newblock Volkstelling.
\newblock \url{https://sites.google.com/site/faisalkamiran/}, 2001.

\bibitem{pedreshi2008discrimination}
Dino Pedreshi, Salvatore Ruggieri, and Franco Turini.
\newblock Discrimination-aware data mining.
\newblock In {\em KDD}, pages 560--568. ACM, 2008.

\bibitem{romei2014multidisciplinary}
Andrea Romei and Salvatore Ruggieri.
\newblock A multidisciplinary survey on discrimination analysis.
\newblock {\em The Knowledge Engineering Review}, 29(05):582--638, 2014.

\bibitem{ruggieri2010data}
Salvatore Ruggieri, Dino Pedreschi, and Franco Turini.
\newblock Data mining for discrimination discovery.
\newblock {\em TKDD}, 4(2):9, 2010.

\bibitem{spirtes2000causation}
Peter Spirtes, Clark~N Glymour, and Richard Scheines.
\newblock {\em Causation, prediction, and search}, volume~81.
\newblock MIT press, 2000.

\bibitem{statnikov2013algorithms}
Alexander Statnikov, Jan Lemeir, and Constantin~F Aliferis.
\newblock Algorithms for discovery of multiple markov boundaries.
\newblock {\em JMLR}, 14(1):499--566, 2013.

\bibitem{zliobaite2011handling}
Indre \v{Z}liobait\.{e}, Faisal Kamiran, and Toon Calders.
\newblock Handling conditional discrimination.
\newblock In {\em ICDM}, pages 992--1001. IEEE, 2011.

\bibitem{zhang2016situation}
Lu~Zhang, Yongkai Wu, and Xintao Wu.
\newblock Situation testing-based discrimination discovery: a causal inference
  approach.
\newblock In {\em Proceedings of IJCAI 2016}, 2016.

\end{thebibliography}

\appendix
\appendixpage

\section{Proof of Lemma \ref{thm:qd}}\label{sec:thm:qd}
\begin{proof}
We first sort the nodes in the causal graph according to the topological ordering of the DAG, so that for each sorted pair of nodes $X$ and $Y$ that $X$ is ahead of $Y$, $X$ must be $Y$'s non-descendent and $Y$ must be $X$'s non-ancestor. The topological ordering is guaranteed to be found in a DAG \cite{cormen2009introduction}. We represent the sorted nodes by an ordered list $\{\cdots,C,\cdots,E,\cdots\}$. According to the Markov condition, we have
\begin{equation}\label{eq:qd1}
\Pr(V|\mathrm{Prior}(V))=\Pr(V|\mathrm{Par}(V)),
\end{equation}
where $\mathrm{Prior}(V)$ represents all the nodes prior to $V$ in the ordering. Now we consider a topological ordering such that, (i) node $E$ and all nodes in $\mathbf{Q}$ are consecutive in ordering, (ii) all nodes posterior to $E$ are $E$'s descendents. It is easy to prove that such a topological ordering can always be constructed.\footnote{For (i), if any node lies between $E$ and some of its parents, we can move the node to the front of all $E$'s parents and the resultant list is still a topological ordering. Similarly we can prove (ii).} Denote by $\mathbf{X},\mathbf{Y},\mathbf{Z}$ the set of nodes that are prior to $C$, between $C$ and $\mathbf{Q}$, and posterior to $E$ respectively. The topological ordering can be represented as the list $\{\mathbf{X},C,\mathbf{Y},\mathbf{Q},E,\mathbf{Z}\}$. According to the definition of the block set, $\mathbf{B}$ contains no node in $\mathbf{Z}$. Thus, $\mathbf{B}\subseteq \mathbf{X}\cup \mathbf{Y}\cup \mathbf{Q}$. We define $\mathbf{X'}=\mathbf{X}\backslash \mathbf{B}$, $\mathbf{Y'}=\mathbf{Y}\backslash \mathbf{B}$, $\mathbf{Q'}=\mathbf{Q}\backslash \mathbf{B}$. Since sets $\mathbf{X},\mathbf{Y},\mathbf{Q}$ are mutually exclusive, we have $\mathbf{B}=(\mathbf{X}\backslash \mathbf{X'})\cup (\mathbf{Y}\backslash \mathbf{Y'})\cup (\mathbf{Q}\backslash \mathbf{Q'})$, which entails that
\begin{equation}\label{eq:qd2}
(\mathbf{X}\backslash \mathbf{X'})\cup (\mathbf{Y}\backslash \mathbf{Y'})=\mathbf{B}\cup \mathbf{Q'}.
\end{equation}
From probability theories, we have
\begin{equation*}
\begin{split}
& \Pr(e^{+}|c^{+},\mathbf{b})=\frac{\Pr(e^{+},c^{+},\mathbf{b})}{\Pr(c^{+},\mathbf{b})} \\
& =\frac{1}{\Pr(c^{+},\mathbf{b})} \sum_{\mathbf{X'},\mathbf{Y'},\mathbf{Q'},\mathbf{Z}}\Pr(\mathbf{x},c^{+},\mathbf{y},\mathbf{q},e^{+},\mathbf{z}).
\end{split}
\end{equation*}
According to the chain rule of probability calculus, we have
\begin{equation*}
\begin{split}
& \Pr(e^{+}|c^{+},\mathbf{b}) \\
& =\frac{1}{\Pr(c^{+},\mathbf{b})} \sum_{\mathbf{X'},\mathbf{Y'},\mathbf{Q'},\mathbf{Z}}\Pr(\mathbf{x},c^{+},\mathbf{y},\mathbf{q}) \\
& \qquad \cdot \Pr(e^{+}|\mathrm{Prior}(E))\cdot \Pr(\mathbf{z}|\mathrm{Prior}(\mathbf{Z})) \\
& =\frac{1}{\Pr(c^{+},\mathbf{b})} \sum_{\mathbf{X'},\mathbf{Y'},\mathbf{Q'}}\Pr(\mathbf{x},c^{+},\mathbf{y},\mathbf{q})\cdot \Pr(e^{+}|\mathrm{Prior}(E)).
\end{split}
\end{equation*}
From Equation \eqref{eq:qd1}, it follows that
\begin{equation*}
\begin{split}
& \Pr(e^{+}|c^{+},\mathbf{b}) \\
& =\frac{1}{\Pr(c^{+},\mathbf{b})} \sum_{\mathbf{X'},\mathbf{Y'},\mathbf{Q'}}\Pr(\mathbf{x},c^{+},\mathbf{y},\mathbf{q})\cdot \Pr(e^{+}|\mathrm{Par}(E)) \\
& =\frac{1}{\Pr(c^{+},\mathbf{b})} \sum_{\mathbf{X'},\mathbf{Y'},\mathbf{Q'}}\Pr(\mathbf{x},c^{+},\mathbf{y},\mathbf{q})\cdot \Pr(e^{+}|c^{+},\mathbf{q}) \\
& =\frac{1}{\Pr(c^{+},\mathbf{b})} \sum_{\mathbf{Q'}} \Big\{ \Pr(e^{+}|c^{+},\mathbf{q})\cdot \sum_{\mathbf{X'},\mathbf{Y'}}\Pr(\mathbf{x},c^{+},\mathbf{y},\mathbf{q}) \Big\}.
\end{split}
\end{equation*}
From Equation \eqref{eq:qd2}, we have
\begin{equation*}
\begin{split}
& \Pr(e^{+}|c^{+},\mathbf{b}) =\frac{1}{\Pr(c^{+},\mathbf{b})} \sum_{\mathbf{Q'}}\Pr(e^{+}|c^{+},\mathbf{q})\cdot \Pr(c^{+},\mathbf{b},\mathbf{q}) \\
& = \sum_{\mathbf{Q'}}\Pr(\mathbf{q}|c^{+},\mathbf{b})\cdot \Pr(e^{+}|c^{+},\mathbf{q}) \\
& = \sum_{\mathbf{Q'}}\Pr(\mathbf{q'}|c^{+},\mathbf{b})\cdot \Pr(e^{+}|c^{+},\mathbf{q}).
\end{split}
\end{equation*}

If $(C \Perp \mathbf{Q'} \mid \mathbf{B})_{\mathcal{G}'}$, then we can find a path from $C$ to $E$ through $\mathbf{Q}'$ that is not blocked, which means that $(C \Perp E \mid \mathbf{B})_{\mathcal{G}'}$. This contradicts $\mathbf{B}$ being a block set. Therefore, we must have $(C \Perp \mathbf{Q'} \mid \mathbf{B})_{\mathcal{G}'}$, which entails $(C \Perp \mathbf{Q'} \mid \mathbf{B})_{\mathcal{G}}$ according to the $d$-separation criterion. Thus, it follows that
\begin{equation*}
\Pr(e^{+}|c^{+},\mathbf{b}) = \sum_{\mathbf{Q'}}\Pr(\mathbf{q'}|\mathbf{b})\cdot \Pr(e^{+}|c^{+},\mathbf{q}).
\end{equation*}

We can obtain similar result for $\Pr(e^{+}|c^{-},\mathbf{b})$. Therefore, we have
\begin{equation}\label{eq:qd3}
\Delta P|_{\mathbf{b}}=\sum_{\mathbf{Q'}}\Pr(\mathbf{q'}|\mathbf{b})\cdot \Delta P|_{\mathbf{q}}.
\end{equation}

Hence, the lemma is proven.
\end{proof}

\section{Proof of Lemma \ref{thm:q}}\label{sec:thm:q}
\begin{proof}
We classify the paths from $C$ to $E$ other than arc $C \rightarrow E$ into two cases based on the last node $X$ ahead of $E$ on the path. For the first case, $X$ is a parent of $E$. Thus, $X$ is a noncollider and belongs to $\mathbf{Q}$. Based on the definition, each path in the first case is blocked by $\mathbf{Q}$. For the second case, $X$ is a child of $E$. Then, there must be at least one collider $Y$ on each path in the second case. Otherwise, the path is mono-directional with all the arcs pointing from $E$ to $C$, forming a circle with the arc $C \rightarrow E$. This contradicts to that a CBN is a directed acyclic graph. Let $Y$ be the last collider ahead of $E$ on a path. Then, neither $Y$ nor its descendant $Z$ can be $E$'s parent. Otherwise, mono-directional path $E \rightarrow \cdots \rightarrow Y \rightarrow E$ or $E \rightarrow \cdots \rightarrow Y \rightarrow \cdots \rightarrow Z \rightarrow E$ forms a circle, which again contradicts to that a CBN is a directed acyclic graph. Thus, according to the definition, each path in the second case is blocked by $\mathbf{Q}$. Finally, $\mathbf{Q}$ contains none of $E$'s descendents. Therefore, $\mathbf{Q}$ is a block set.

Hence, the lemma is proven.
\end{proof}

\section{Proof of Lemma \ref{thm:mu}}\label{sec:thm:mu}
\begin{proof}
By definition, we have
\begin{equation*}
\hat{\mu}_{\mathbf{B}}=\sum_{\mathbf{B}}\Pr(\mathbf{b})\cdot \Delta P|_{\mathbf{b}}.
\end{equation*}
According to Equation \eqref{eq:qd3}, we have
\begin{equation*}
\hat{\mu}_{\mathbf{B}}=\sum_{\mathbf{B}}\Pr(\mathbf{b})\cdot \sum_{\mathbf{Q'}}\Pr(\mathbf{q'}|\mathbf{b})\cdot \Delta P|_{\mathbf{q}},
\end{equation*}
where $\mathbf{Q'}=\mathbf{Q}\backslash \mathbf{B}$. It follows that
\begin{equation*}
\begin{split}
& \hat{\mu}_{\mathbf{B}}=\sum_{\mathbf{B},\mathbf{Q'}}\Pr(\mathbf{b})\cdot \Pr(\mathbf{q'}|\mathbf{b})\cdot \Delta P|_{\mathbf{q}} = \sum_{\mathbf{B},\mathbf{Q'}}\Pr(\mathbf{b},\mathbf{q'})\cdot \Delta P|_{\mathbf{q}} \\
& = \sum_{\mathbf{X}=\mathbf{B\cup Q'}}\Pr(\mathbf{x})\cdot \Delta P|_{\mathbf{q}} = \sum_{\mathbf{B'},\mathbf{Q}}\Pr(\mathbf{b'},\mathbf{q})\cdot \Delta P|_{\mathbf{q}},
\end{split}
\end{equation*}
where $\mathbf{B'}=\mathbf{B}\backslash \mathbf{Q}$. Then, it follows that
\begin{equation*}
\hat{\mu}_{\mathbf{B}}=\sum_{\mathbf{Q}}\Delta P|_{\mathbf{q}}\cdot \sum_{\mathbf{B'}}\Pr(\mathbf{b'},\mathbf{q}) = \sum_{\mathbf{Q}}\Delta P|_{\mathbf{q}}\cdot \Pr(\mathbf{q}) = \mu_{\mathbf{Q}}.
\end{equation*}

Hence, the lemma is proven.
\end{proof}

\section{Proof of Lemma \ref{thm:sigma}}\label{sec:thm:sigma}
\begin{proof}
By definition, we have
\begin{equation*}
\begin{split}
& \hat{\sigma}_{\mathbf{B}}^{2}=\sum_{\mathbf{B}}\Pr(\mathbf{b})(\Delta P|_{\mathbf{b}}-\hat{\mu}_{\mathbf{B}})^{2} \\
& = \sum_{\mathbf{B}}\Pr(\mathbf{b})\Big( (\Delta P|_{\mathbf{b}})^{2} - 2\hat{\mu}_{\mathbf{B}}\Delta P|_{\mathbf{b}} + \hat{\mu}_{\mathbf{B}}^{2} \Big) \\
& = \sum_{\mathbf{B}}\Pr(\mathbf{b})(\Delta P|_{\mathbf{b}})^{2} -2\hat{\mu}_{\mathbf{B}}\sum_{\mathbf{B}}\Pr(\mathbf{b})\Delta P|_{\mathbf{b}}  + \hat{\mu}_{\mathbf{B}}^{2}\sum_{\mathbf{B}}\Pr(\mathbf{b}).
\end{split}
\end{equation*}

According to Equation \eqref{eq:qd3}, we have
\begin{equation*}
\begin{split}
& \sum_{\mathbf{B}}\Pr(\mathbf{b})\cdot (\Delta P|_{\mathbf{b}})^{2} = \sum_{\mathbf{B}}\Pr(\mathbf{b})\cdot \Big(\sum_{\mathbf{Q'}}\Pr(\mathbf{q'}|\mathbf{b})\cdot \Delta P|_{\mathbf{q}}\Big)^{2} \\
& = \sum_{\mathbf{B}}\Pr(\mathbf{b})\cdot \Big(\sum_{\mathbf{Q'}}\sqrt{\Pr(\mathbf{q'}|\mathbf{b})}\cdot \sqrt{\Pr(\mathbf{q'}|\mathbf{b})} \Delta P|_{\mathbf{q}}\Big)^{2}.
\end{split}
\end{equation*}
According to Cauchy's Inequality, it follows that
\begin{equation*}
\begin{split}
& \sum_{\mathbf{B}}\Pr(\mathbf{b})\cdot (\Delta P|_{\mathbf{b}})^{2} \\
& \leq \sum_{\mathbf{B}}\Pr(\mathbf{b})\cdot \Big(\sum_{\mathbf{Q'}}\Pr(\mathbf{q'}|\mathbf{b})\Big) \cdot \Big(\sum_{\mathbf{Q'}}\Pr(\mathbf{q'}|\mathbf{b})\cdot (\Delta P|_{\mathbf{q}})^{2}\Big) \\
& = \sum_{\mathbf{B}}\Pr(\mathbf{b})\cdot \Big(\sum_{\mathbf{Q'}}\Pr(\mathbf{q'}|\mathbf{b})\cdot (\Delta P|_{\mathbf{q}})^{2}\Big).
\end{split}
\end{equation*}
Similar to the proof of Lemma 4.1, it follows that
\begin{equation*}
\begin{split}
& \sum_{\mathbf{B}}\Pr(\mathbf{b})\cdot (\Delta P|_{\mathbf{b}})^{2} \leq \sum_{\mathbf{B},\mathbf{Q'}}\Pr(\mathbf{b},\mathbf{q'})\cdot (\Delta P|_{\mathbf{q}})^{2} \\
& = \sum_{\mathbf{X}=\mathbf{B\cup Q'}}\Pr(\mathbf{x})\cdot (\Delta P|_{\mathbf{q}})^{2} = \sum_{\mathbf{B'},\mathbf{Q}}\Pr(\mathbf{b'},\mathbf{q})\cdot (\Delta P|_{\mathbf{q}})^{2} \\
& = \sum_{\mathbf{Q}}\Pr(\mathbf{q})\cdot (\Delta P|_{\mathbf{q}})^{2}.
\end{split}
\end{equation*}
Hence,  we have
\begin{equation*}
\sum_{\mathbf{B}}\Pr(\mathbf{b})\cdot (\Delta P|_{\mathbf{b}})^{2} \leq \sum_{\mathbf{Q}}\Pr(\mathbf{q})\cdot (\Delta P|_{\mathbf{q}})^{2}.
\end{equation*}
According to Lemma 4.1, we have
\begin{equation*}
\hat{\mu}_{\mathbf{B}}=\sum_{\mathbf{B}}\Pr(\mathbf{b})\cdot \Delta P|_{\mathbf{b}}=\hat{\mu}_{\mathbf{Q}}=\sum_{\mathbf{Q}}\Pr(\mathbf{q})\cdot \Delta P|_{\mathbf{q}}.
\end{equation*}
Besides, we have
\begin{equation*}
\sum_{\mathbf{B}}\Pr(\mathbf{b})=\sum_{\mathbf{Q}}\Pr(\mathbf{q})=1.
\end{equation*}
Thus, it follows that
\begin{equation*}
\begin{split}
& \hat{\sigma}_{\mathbf{B}}^{2} \leq \sum_{\mathbf{Q}}\Pr(\mathbf{q})(\Delta P|_{\mathbf{q}})^{2} -2\hat{\mu}_{\mathbf{Q}}\sum_{\mathbf{Q}}\Pr(\mathbf{q})\Delta P|_{\mathbf{q}} + \hat{\mu}_{\mathbf{Q}}^{2}\sum_{\mathbf{Q}}\Pr(\mathbf{q})\\
& =\hat{\sigma}_{\mathbf{Q}}^{2}.
\end{split}
\end{equation*}

Hence, the lemma is proven.
\end{proof}

\end{document}